\def\BState{\State\hskip-\ALG@thistlm}
\DeclarePairedDelimiter\paren{\lparen}{\rparen}
 \newcommand{\E}{\mathbb{E}}
 \newcommand{\ind}{\mathds{1}}
 \newcommand{\Ind}{\mathbbm{1}} 
 \newcommand{\F}{\mathcal{F}}
  \newcommand{\G}{\mathcal{G}}
  \newcommand{\X}{\mathcal{X}}
\DeclareMathOperator*{\argmin}{arg\,min}
\newcommand{\D}{\mathcal{D}}
\newcommand{\VC}{{\rm VC}}
\definecolor{cd60952}{RGB}{214,9,82}
\date{}
\newtheorem{Theorem}{Theorem}[section]
\newtheorem{Lemma}[Theorem]{Lemma}
\newtheorem{Definition}[Theorem]{Definition}
\newtheorem{Proposition}[Theorem]{Proposition}
\newtheorem{Corollary}[Theorem]{Corollary}
\newtheorem{Remark}[Theorem]{Remark}
\newtheorem{Example}[Theorem]{Example}
\newtheorem{algorithm}[Theorem]{Algorithm}
\title{Fast classification rates without standard margin assumptions}
\author{Olivier Bousquet \and Nikita Zhivotovskiy\thanks{Google Research, Brain Team, Z\"{u}rich \href{}{\{obousquet,zhivotovskiy\}@google.com}}}
\begin{document}
\maketitle
\abstract{
We consider the classical problem of learning rates for classes with  finite VC dimension. It is well known that fast learning rates up to $O\left(\frac{d}{n}\right)$ are achievable by the empirical risk minimization algorithm (ERM) if low noise or margin assumptions are satisfied. These usually require the optimal Bayes classifier to be in the class, and it has been shown that when this is not the case, the fast rates cannot be achieved even in the noise free case. In this paper, we further investigate the question of the fast rates under the misspecification, when the Bayes classifier is not in the class (also called the {\em agnostic} setting).

First, we consider classification with a reject option, namely Chow's reject option model, and show that by slightly lowering the impact of \emph{hard instances}, a learning rate of order $O\left(\frac{d}{n}\log \frac{n}{d}\right)$ is always achievable in the agnostic setting by a specific learning algorithm. Similar results were only known under special versions of margin assumptions. We also show that the performance of the proposed algorithm is never worse than the performance of ERM.

Based on those results, we derive the necessary and sufficient conditions for classification (without a reject option) with fast rates in the agnostic setting achievable by improper learners. This simultaneously extends the work of Massart and N\'{e}d\'{e}lec (Ann. of Statistics, 2006), which studied this question in the case where the Bayesian optimal rule belongs to the class,
and the work of Ben-David and Urner (COLT, 2014), which allows the misspecification but is limited to the no noise setting. Our result also provides the first general setup in statistical learning theory in which an improper learning algorithm may significantly improve the learning rate for non-convex losses.
}

\section{Introduction}
In the context of statistical learning, the notion of the Vapnik-Chervonenkis (VC) dimension \cite{Vapnik68} plays a central role as it characterizes the classes of binary functions for which learning is at all possible, through the finiteness of this dimension.
For classes $\F$ of binary functions with finite VC dimension $d$, one of the key questions is the ability to {\em estimate} the best  function in the class from a finite sample of size $n$. Typically, one looks for high-probability upper bounds on the so-called {\em estimation error} (also called the {\em excess risk}).
More precisely, given a class $\mathcal{F}$ of $\{0,1\}$-valued functions of VC dimension $d$, based on an i.i.d. training sample $\{(X_1, Y_1), \ldots, (X_n, Y_n)\}$, the aim is to construct a classifier $\widehat{f}$ such that its \emph{excess risk} 
\begin{equation}
\label{excessriskfirst}
\mathcal{E}(\widehat{f}) = \Pr(\widehat{f}(X) \neq Y) - \inf\limits_{f \in \mathcal{F}}\Pr(f(X) \neq Y)
\end{equation}
is small with high probability, where the misclassification risk $\Pr(f(X) \neq Y)$ is measured with respect to the same distribution from which the training sample was obtained.

Without any specific assumptions on $\F$ or on the distribution of the data, one can get the following classical result (see \cite{Vapnik74, Talagrand94, Boucheron05b}): for any $\delta>0$, with probability at least $1 - \delta$, with respect to the realization of the learning sample, it holds that
\begin{equation}
\label{standardbound}
\Pr(\widehat{f}(X) \neq Y) - \inf\limits_{f \in \mathcal F}\Pr(f(X) \neq Y) \lesssim \sqrt{\frac{d + \log \frac{1}{\delta}}{n}},
\end{equation}
where the symbol $\lesssim$ suppresses the multiplicative absolute constants and $\widehat{f}$ is an empirical risk minimizer\footnote{In the case where there are several empirical risk minimizers we may choose any of them.} (ERM) defined as $\widehat{f} = \argmin_{f \in \mathcal{F}}\sum_{i = 1}^n\ind[f(X_i) \neq Y_i]$. Here $\ind$ stands for the indicator function. It is well-known that this algorithm is optimal, in the sense that, without any assumptions, ERM achieves the best possible estimation rate $O(\sqrt{d/n})$. In the context of classification the conditions under which this rate can be improved from a slow $O\left(1/\sqrt{n}\right)$ to a so-called fast rate of $O\left(1/n\right)$ (or $O\left(1/n^\alpha\right)$ for $\alpha \in [\frac{1}{2}, 1]$) have been studied thoroughly in the seminal papers \citep{Tsybakov04, Massart06, Bartlett06}.
However, the picture is not quite complete yet and we further investigate the particular conditions under which the fast rates can be obtained.

Note that the estimation error, or the excess risk in Equation \eqref{excessriskfirst} is defined with respect to the best function in the class $\F$. The best possible risk in $\F$ could be significantly larger than the best possible risk achieved by any binary function, which is called the {\em Bayes risk} $R_B=\inf_{f:\X\to\{0,1\}} \Pr(f(X)\ne Y)$ (where the infimum is over all measurable functions) with associated {\em Bayes classifier} $f_B^*$ defined by  $f_B^*(x)=\ind[\Pr(Y=1|x)\ge \frac{1}{2}]$.

While there are well known conditions on the {\em noise function} $\eta$ defined by $\eta(x) = |2\Pr(Y=1|x)-1|$ which allow to obtain fast rates in the case where the model is {\em well specified}, i.e., in the case where the Bayes classifer $f_B^*$ belongs to $\F$ \citep{Tsybakov04, Massart06}, we demonstrate that it is possible to also obtain fast rates in the case where the Bayes classifier is not  in $\F$. This setting is called the {\em misspecified} or the {\em agnostic} setting. One of our main finding is that in this case, the ERM algorithm may not be optimal and we thus exhibit an improper\footnote{A {\em proper} algorithm is one that returns a classifier from $\F$, while an {\em improper} one is allowed to return any binary-valued function.} algorithm that achieves fast estimation rates under appropriate noise conditions.

\subsection{Estimation Rates in Classification}

\paragraph{Estimation and approximation.}
Let us recall that the {\em approximation error} is the deterministic quantity which depends only on the class $\F$ defined by the difference between the best possible risk in $\F$ and the best possible risk over all measurable functions. In other words, it is the non-negative quantity
\begin{equation}\label{approxerror}
    \inf\limits_{f \in \mathcal{F}}R(f) - R(f_B^*)\,,
\end{equation}
where we set $R(f) = \Pr(f(X) \neq Y)$.
Obviously, the fact that the model $\F$ is well-specified is equivalent to the fact that the approximation error is equal to $0$.
Since we don't restrict ourselves to the well-specified setting, to get a complete picture of the behaviour of an algorithm, it would be natural to ask about the value of the approximation error. However, this would require to make assumptions on the class of function and on the distribution which are more specific than the ones we are considering in this study: our main focus is to obtain the fast rates under minimal possible assumptions. It is however common to apply results on the estimation error to the setup of {\em model selection} where one considers a sequence of increasingly large VC classes that eventually can approximate any binary function, but applying our results to this setting is beyond the scope of this paper. We refer to \citep{lugosi2004complexity, Tsybakov04} and to the survey of some related results in the context of classification \cite{Boucheron05b}.

\paragraph{Known results.}
Let us first explain what is known about the estimation rates in classification.
This is (roughly) summarized in Table \ref{tab:main} where we show the rates that can be obtained in the various settings.

\begin{table}[]
    \centering
    \begin{tabular}{|c!{\vrule width 1pt}p{0.24\textwidth}|p{0.20\textwidth}|c|}
    \hline
        \multirow{2}{*}{} & Bayes in $\F$ & \multicolumn{2}{c|}{Agnostic (misspecified)} \\
        \cline{3-4}
        & (well-specified) & Finite Diameter $D$ & Infinite Diameter \\
        \noalign{\hrule height 1pt}
        Deterministic labels& Realizable setting \newline $\tilde{\Theta}\paren*{\frac{d}{n}}$ \cite{Vapnik74, blumer1989learnability}& \multirow{2}{*}{$\tilde{O}\paren*{\frac{D}{n}}$  \cite{Bendavid14}}& \multirow{4}{*}{$\Omega\left(\frac{1}{\sqrt{n}}\right)$\citep{Kaariainen05}}\\
        \cline{1-3}
        Bounded noise & Massart's setting \newline$\tilde{\Theta}\paren*{\frac{d}{hn}}$ \cite{Massart06}& {\bf Our result}\newline $\tilde{O}\paren*{\frac{D}{hn}}$ (Alg.~\ref{alg:finitediameter})&\\
        \hline
        Arbitrary noise & \multicolumn{3}{c|}{ $\Theta\paren*{\sqrt{\frac{d}{n}}}$ \cite{Vapnik74, Talagrand94, devroye2013probabilistic}}\\
        \hline
    \end{tabular}
    \caption{Our main results in the context of the standard setup. Estimation rates for classes with VC dimension $d$ and the combinatorial diameter $D$ and Massart's margin parameter $h > 0$. The optimal rates (up to logarithmic factors) are achieved by ERM in the well-specified and arbitrary noise cases, while in the finite diameter case the bounds are achieved by the algorithm described in Theorem 12 of \cite{Bendavid14} and our Algorithm \ref{alg:finitediameter} respectively. The notions $\tilde{\Theta}(\cdot)$ and $\tilde{O}(\cdot)$ are the versions of the standard ${\Theta}(\cdot)$ and ${O}(\cdot)$ notions that suppress the poly-logarithmic factors.} 
    \label{tab:main}
\end{table}

To simplify the exposition of these results, we only consider the {\em expected} estimation error as opposed to deviations inequalities of the form \eqref{standardbound}. In the most general case, as explained above, without specific assumption, when the class $\F$ is of VC dimension $d$, the expected estimation error is controlled by $\Theta\left(\sqrt{d/n}\right)$ and this rate is obtained by the empirical risk minimizer (ERM).

One can then consider two kinds of restrictions: restrictions on the distribution (and, in particular, on the noise function $\eta$), and restrictions on the class of functions $\F$. 
Regarding the noise function, an extreme situation is one with no noise, that is, when $\eta(X)=1$ almost surely, which is also called the {\em deterministic labeling} setting. Indeed, this means that the label of any particular instance is not random.
A less extreme situation is the {\em bounded noise} regime when the noise is bounded away from $0$, that is, when $\eta(X)\ge h$ almost surely for some $h>0$.

Regarding the class $\F$, the {\em well-specified case} corresponds to the assumption that $f_B^*\in \F$, and the {\em agnostic} or the {\em misspecified case} corresponds to the absence of this assumption. Finally, we can also distinguish the case where the class $\F$ has a finite or infinite {\em combinatorial diameter} $D$ defined as the largest cardinal of a set on which two functions in $\F$ may disagree (see Definition \ref{def:diameter} below).

Let us recall some known results under a combination of those restrictions.
\begin{itemize}
    \item {\bf Realizable setting.} The so-called {\em realizable} case corresponds to a well-specified model and deterministic labels. It is well known that, up to logarithmic factors, in this case the optimal rate is $\tilde{\Theta}(d/n)$ and it is achieved by the ERM algorithm \citep{Vapnik74, blumer1989learnability}.
    \item {\bf Massart's setting.} The Massart's margin condition \cite{Massart06} or Massart's setting correspond to the conjunction of the bounded noise and that $f^*_B \in \F$. In this case, the rate is still fast, but depends on the margin parameter $h$. 
More precisely, one can show that the expected estimation error of ERM is upper bounded by $O\left(\frac{d\log\frac{nh^2}{d}}{nh}\right)$. Moreover, matching lower bounds can be shown \cite{Massart06, Zhivotovskiy18}. 
    \item {\bf Deterministic labels.} Recently, Ben-David and Urner \cite{Bendavid14} have studied the deterministic labels case, that is, $Y = f^*_B(X)$ almost surely and $h = 1$, under the model misspecification and have shown that one can obtain the fast rates under the assumption of the finite combinatorial diameter $D$. However, they show that ERM is not optimal as it may suffer the slow rates.
 In particular, there is a class with VC dimension $d$ such that any proper learning algorithm has a learning rate lower bounded by $\Omega\left(\sqrt{\frac{d}{n}}\right)$. Finally, \cite{Kaariainen05, Bendavid14} also show that as soon as the combinatorial diameter $D$ is infinite and the model is misspecified, \emph{any algorithm} may have the slow rate $\Omega\left(\frac{1}{\sqrt{n}}\right)$ even in the case of deterministic labels.
\end{itemize}

The remaining open question is thus about the rates in the bounded noise case under the model misspecification. One of our main results shows the existence of an algorithm (Algorithm \ref{alg:finitediameter}) which obtains fast rates in this situation, thus completing the picture.

\subsection{Our Approach}
In order to study the non-deterministic and agnostic regime, we combine the following insights which ultimately lead to our main result:
\begin{enumerate}
    \item {\bf The problem of multiple (almost) minimizers.} 
    The first fact, which has been observed in the previous literature, is that the main reason why ERM suffers the slow rates in the agnostic regime is due to the fact that there can be multiple almost ERMs that are far apart from each other in terms of their $L_1(P)$ distance. Therefore, in general, there can be many classifiers that seem identically good given the data (hence cannot be told apart) but have different risks and just picking one of the ERM classifiers will not necessarily work. So one could imagine to combine several of them, but this means that we have to come up with a way to decide on how to predict on instances where they disagree.
    \item {\bf Aggregation for strongly convex losses.} The second insight is that if one considers classes $\F$ of real-valued functions and an $\ell_q, q > 1$ risk, the combination of the Lipschitzness and strong convexity of the loss function allows to design algorithms that achieve fast estimation rates with respect to that loss function without any additional assumptions rather than the boundedness of $Y$ and the boundedness of functions in $\F$. This is at the heart of the aggregation theory. 
    \item {\bf Classification with a reject option (with abstentions).} The third insight is coming from the idea of classification with a reject option, or abstaining classifiers which are allowed to abstain from making a prediction on certain instances.  One can see such classifiers as functions taking their values in $\{0,1,*\}$ and can define the following loss function:
\begin{equation}
\label{rp}
R^p(f) = \Pr\left(f(X) \neq Y\ \text{and}\ f(X) \in \{0, 1\}\right) + \left(\frac{1}{2} - p\right)\Pr(f(X) = *),
\end{equation}    
    where $p \in [0, \frac{1}{2}]$ is a fixed parameter corresponding to the cost of not making a definite prediction. The key observation that we use is that if we replace the value $*$ by $1/2$ and consider abstaining classifiers as functions to the real line, this loss function is both Lipschitz and strongly convex and can be related to $\ell_q$ risk for a specific value of $q$. This allows us to apply the techniques of the aggregation theory to analyze it. Therefore, based on the aggregation result the values of $p$ marginally greater than zero allow us to obtain the fast rates without any additional assumptions.
    \item {\bf Breaking ties.} We will put the above ingredients together by coming up with an aggregation strategy over a set of almost ERMs. Finally, we show how to convert an abstaining algorithm into a non-abstaining one. For that, we need to be able to perform confident predictions on the region of disagreement of almost ERMs by the majority voting.
\end{enumerate}

Let us now review the aggregation theory and reject model in more detail.

\paragraph{Aggregation theory.}
The aggregation theory was initiated by Nemirovskii \cite{Nemirovski00}. Assume that we are given a finite dictionary $F$ of real-valued functions and denote $M = |F|$. The general problem in \emph{model selection aggregation} \cite{Tsybakov03} is to construct an estimator $\widehat{f}$ such that
\begin{equation}
\label{aggreg}
\E(\widehat{f}(X) - Y)^2 - \inf\limits_{f \in F}\E(f - Y)^2 \lesssim \frac{\log M}{n},
\end{equation}
based on the training sample provided that $|Y| \le 1$ and $|f(X)| \le 1$ for all $f \in F$ and the expectation is also taken with respect to the training sample.
Interestingly, since $F$ is finite, and thus not convex, no \emph{proper estimator} (i.e., such that $\widehat{f} \in F$), may achieve the desired bound \eqref{aggreg} \cite{Juditsky08}.  At the same time, there are several ways to construct a class $G$ such that $F \subset G$ and an estimator $\widehat{f} \in G$ such that the learning rate \eqref{aggreg} is satisfied. In the last decade, several optimal algorithms were introduced, see \cite{Audibert07, Lecue09, Lecue14, Rakhlin15, Mendelson17} and references therein. Unfortunately, the results of the aggregation theory are specific to the squared loss (can be extended to strongly convex Lipschitz losses \cite{Audibert09, Lecue09}), but in the binary loss case none of them are applicable: in the theory of classification the lower bounds $\Omega\left(\frac{1}{\sqrt{n}}\right)$ can be provided for \emph{any learning algorithm}. 

 One of the challenges we are facing in adapting results from the aggregation literature is that the algorithms are usually designed for the squared loss only and for finite classes of functions while we are considering classes with finite VC dimension. We also have that the progressive mixture rules \cite{Audibert07}, as well as the variants of the \emph{empirical star-algorithm} considered  in \cite{Audibert07, Lecue09, Rakhlin15}, may produce arbitrary convex combinations of a few functions in the class $F$ \eqref{aggreg} which is not directly applicable to our situation of binary classification.
So, in order to design an appropriate aggregation procedure, we exploit and further develop the following recent observation of Mendelson \cite{Mendelson17, Mendelson18}: in the model selection aggregation with the squared loss the class $G$ introduced above may be restricted only to the functions of the form $\frac{f + g}{2}$ for some  $f, g \in F$, as long as an appropriate learning algorithm is used.

\paragraph{Classification with a reject option.}
This setup has been motivated by various practical constraints, and many authors have studied its statistical properties (see e.g., \cite{Wegkamp06, Bartlett08, Ming10, Elyaniv10} and references therein). One of the first reject option models appears in the seminal paper of Chow \cite{Chow70} (see the discussion in \cite[Section 9]{Elyaniv10} and the analysis of this framework in \cite{Wegkamp06, Bartlett08}).
The intuitive interpretation of the risk functional \eqref{rp} is the following: we pay the fixed cost $\frac{1}{2} - p$ for all instances where we decided to abstain from the prediction. This cost is only slightly lower than the expected risk of the random guess if $p$ is close to zero. In particular, it can be seen (see e.g., the discussion in \cite{Wegkamp06}) that the case $p = 0$ corresponds the standard agnostic learning. Indeed, it follows from the result of \citep{Chow70} that the $\{0, 1, *\}$-valued minimizer of the $R_p$ risk is equal to $f^{*}_B$ except on the instances where $1/2-p \le \Pr(Y = 1|x) \le  1/2 + p$ which corresponds to the instance where it abstains. Another natural way to interpret this risk model is as follows: if on the test sample, the cost of the misclassification is too high, one should ask an expert to classify the instance. The reason why we are not is not using the expert all the time, and the actual learning is happening is because the expert may predict only marginally better than a random guess. Therefore, if we want to construct a classifier with a reject option $\tilde{f}$ such that $R^p(\tilde{f}) \ll \frac{1}{2} - p$, then only a small fraction of the instances can be assigned with $*$. Interestingly, a value of $p$ only marginally greater than zero is sufficient to eliminate the slow rates. In Section \ref{sec:relwork} we make some additional comparisons with reject option models.

\paragraph{Subsequent work.} Inspired by the results of the present work, a similar phenomenon have been recently observed in the online learning setup in \cite{neu2020}. Their results demonstrate that abstentions with the cost only marginally lower than the random guess lead to the fast rates without any additional assumptions. However, the proof technique and the algorithm in \cite{neu2020} are different. Instead of using a modified ERM algorithm, the authors use the modified exponential weighting algorithm.

\subsection{Structure of the Paper}
\begin{enumerate}
\item In Section \ref{sec:fastraetsabst}, we formulate our Theorem \ref{mainthmproc} on classification with a reject option. We show that by allowing the learner to predict marginally better than a random guess on some hard instances, fast classification rates are possible without any additional assumptions.
\item In Section \ref{sec:ermrates} we show that our algorithm with a reject option always recovers the rates of ERM if the region of abstentions is replaced by the random guesses.
\item In Section \ref{sec:fasttartesmisspecificiation}, we apply Theorem \ref{mainthmproc} to the analysis of VC classes with the finite combinatorial diameter. Our result is informally shown in Table \ref{tab:main}. In particular, we develop the first general statistical setup showing that an improper learning algorithm may significantly improve the learning rates for the binary non-convex loss. In Section \ref{sec:distribdepdiameter} we discuss a distribution-dependent version of the combinatorial diameter and a corresponding risk bound.
\item In Section \ref{app:log}, we prove a simple result that improves the logarithmic factor in the risk bound for VC classes with the finite combinatorial diameter in the case of deterministic labeling.
\item Section \ref{sec:proofofabstthm} and Section \ref{sec:remainingproofs} are devoted to the proofs. Among the new observations is the fact that in the binary classification, the classical bounds under Massart's and Tsybakov's noise assumptions \cite{Massart06, Tsybakov04} can be recovered using only the original bounds of Vapnik and Chervonenkis \cite{Vapnik74}.
\item Section \ref{sec:relwork} is devoted to the remaining comparisons with some previous papers.
\end{enumerate}

\subsection{Basic Notation and Setup}
We introduce some notation and basic definitions that will be used throughout the text. The symbol $\Ind[A]$ denotes an indicator function of the event $A$. The notation $f \lesssim g$ or $g \gtrsim f$ means that for some universal constant $c>0$ we have $f \le cg$. To avoid the problems with the logarithmic function we assume that $\log x$ means $\max\{\log x, 1\}$. Throughout the paper we also use the standard $O(\cdot), \Omega(\cdot), \Theta(\cdot)$ notation.
We use the letters $c, c_1, c_2. \ldots$ to denote the numerical constants that can change from line to line. For a real valued function $g$ and $s \ge 1$ we define the $L_s$ norm as $\|g\|_{L_s} = \left(\E |g(Z)|^s\right)^\frac{1}{s}$. 

We define the instance space $\mathcal{X}$ and the label space $\mathcal{Y} = \{0, 1\}$. We assume that the set $\mathcal{X} \times \mathcal{Y}$ is equipped with some $\sigma$-algebra and a probability measure $P = P_{X, Y}$ on measurable subsets is defined. We also assume that we are given a set of classifiers $\mathcal F$: these are measurable functions with respect to the introduced $\sigma$-algebra, mapping $\mathcal{X}$ to $\mathcal{Y}$. A learner observes $\left\{(X_{1}, Y_{1}), \ldots, (X_{n}, Y_{n})\right\}$, an i.i.d. training sample from an unknown distribution $P$. Also denote $Z_i = (X_i,Y_i)$ and $\mathcal{Z} = \mathcal{X} \times \mathcal{Y}$. By $P_{n}$ we will denote the expectation with respect to the empirical measure (empirical mean) induced by this sample. We say a set $\{x_1, \ldots, x_k\} \in \mathcal{X}^{k}$ is shattered by $\mathcal F$ if there are $2^k$ distinct classifications of $\{x_1, \ldots, x_k\}$ realized by classifiers in $\mathcal F$. The VC dimension of $\mathcal F$ is the largest integer $d$ such that there exists a set $\{x_1, \ldots, x_d\}$ shattered by $\mathcal F$ \cite{Vapnik68}. We define the growth function $\mathcal{S}_{\mathcal{F}}(n)$ as the maximum possible number of different classifications of a set of $n$ instances realized by classifiers in $\mathcal F$ (maximized over the choice of the $n$ instances). A famous bound by Vapnik and Chervonenkis \cite{Vapnik68} implies that $\log \mathcal{S}_{\mathcal{F}}(n) \lesssim d\log \frac{n}{d}$. We use this basic result throughout the text without referring to it.
 Analogously, we define the notion of the growth function for discrete-valued classes of functions: in this case it is equal to the maximum possible number of different projections of the set of function on a set of $n$ instances. As above, define the prediction risk as $R(f) = \Pr(f(X) \neq Y)$. The Bayesian optimal rule $f^*_{B}$ is defined by $f^*_{B}(x)=\ind[\Pr(Y=1|x)\ge \frac{1}{2}]$. The noise function $\eta$ is defined by $\eta(x) = |2\Pr(Y=1|x)-1|$. The smallest $h \ge 0$ such that 
\begin{equation}
\label{eq:massartsmargin}
|2\Pr(Y=1|X)-1| \ge h
\end{equation}
almost surely is called the Massart's margin parameter \citep{Massart06}. Slightly abusing the notation the symbol $h$ will sometimes denote a real valued function in our proofs. Throughout the paper we set
\[
f^* = \argmin_{f \in \F}R(f),
\]
and assume the uniqueness of $f^*$ without loss of generality. The empirical risk is defined as 
\[
R_n(f) = \frac{1}{n}\sum\limits_{i=1}^n\ind[f(X_i) \neq Y_i].
\]
The $R^p$ risk is given by \eqref{rp}. We adopt the standard assumption that the events appearing in probability claims are measurable.

\section{Fast Rates for Classification with a Reject Option}
\label{sec:fastraetsabst}
We start this section with our first main result. The full proof is deferred to Section \ref{sec:proofofabstthm}.

\begin{Theorem}
\label{mainthmproc}
Fix $p \in [0, \frac{1}{2})$ and consider a class $\mathcal{F}$ of binary functions with VC dimension $d$. For any $\delta\in (0,1)$, there is a learning algorithm (namely, Algorithm \ref{alg:abstentionalgorithm} below) with its output classifier denoted by $\widehat{f_p}$, taking its values in $\{0, 1, *\}$ such that, with probability at least $1 - \delta$,
\[
R^p(\widehat{f}_p) - R(f^*) \lesssim \frac{d\log\frac{n}{d} + \log \frac{1}{\delta}}{np}.
\]
\end{Theorem}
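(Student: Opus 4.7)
The proof hinges on an identity for the ``averaged-with-abstention'' classifier: if $f,g\in\F$ and $(f+g)/2$ denotes the $\{0,\tfrac12,1\}$-valued function abstaining exactly on $\{f\ne g\}$, then
\[
R^p\!\left(\tfrac{f+g}{2}\right) \;=\; \tfrac12\bigl(R(f)+R(g)\bigr) - p\,\|f-g\|_{L_1},
\]
where $\|f-g\|_{L_1}=\Pr(f(X)\ne g(X))$. This follows by a direct case split on $f(x)=g(x)$ vs.\ $f(x)\ne g(x)$: abstention contributes $-p$ per disagreement point, which is exactly the ``effective margin'' of size $p$ that will play the role of Massart's parameter $h$ in an otherwise margin-free setting. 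I therefore take $\widehat f_p:=(\hat f+\hat g)/2$ with $(\hat f,\hat g)\in\F^2$ minimizing $\tfrac1n\sum_i\phi_{f,g}(X_i,Y_i)$, where
\[
\phi_{f,g}(x,y):=\tfrac12\bigl(\ind[f(x)\ne y]+\ind[g(x)\ne y]\bigr)-p\,\ind[f(x)\ne g(x)].
\]
Since $(f^*,f^*)$ is a legitimate pair with $\phi_{f^*,f^*}=\ind[f^*\ne y]$, optimality immediately yields $\tfrac1n\sum_i\phi_{\hat f,\hat g}(X_i,Y_i)\le R_n(f^*)$.

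\textbf{Uniform Bernstein and variance estimate.} Set $\psi_{f,g}:=\phi_{f,g}-\ind[f^*(\cdot)\ne\cdot]$, so that $\E\psi_{\hat f,\hat g}=R^p(\widehat f_p)-R(f^*)$ is precisely what must be bounded, and the previous display says $\E_n\psi_{\hat f,\hat g}\le 0$. The class $\{\psi_{f,g}:(f,g)\in\F^2\}$ has growth function at most $\mathcal{S}_\F(n)^2\lesssim(en/d)^{2d}$, so a Talagrand/Bernstein uniform inequality yields, with probability at least $1-\delta$ and simultaneously in $(f,g)$,
\[
\E\psi_{f,g}-\E_n\psi_{f,g}\;\lesssim\;\sqrt{\E\psi_{f,g}^2\cdot L_n/n}\;+\;L_n/n,\qquad L_n:=d\log(n/d)+\log(1/\delta).
\]
A pointwise case analysis ($\psi^2=1$ on $\{f=g\ne f^*\}$, $\psi^2\le(\tfrac12+p)^2$ on $\{f\ne g\}$, and zero elsewhere) gives
\[
\E\psi_{f,g}^2 \;\le\; \Pr(f=g\ne f^*)+\|f-g\|_{L_1}\;\le\;\min\bigl(\|f-f^*\|_{L_1},\|g-f^*\|_{L_1}\bigr)+\|f-g\|_{L_1}.
\]

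\textbf{Closing the loop and the main obstacle.} Combining the two steps gives a slow-rate bound
\[
R^p(\widehat f_p)-R(f^*) \;\lesssim\; \sqrt{\bigl(\min(\|\hat f-f^*\|_{L_1},\|\hat g-f^*\|_{L_1})+\|\hat f-\hat g\|_{L_1}\bigr)\,L_n/n}\;+\;L_n/n.
\]
To promote this to the target $L_n/(np)$, I rewrite the left-hand side through the key identity as $\tfrac12(R(\hat f)-R(f^*))+\tfrac12(R(\hat g)-R(f^*))-p\|\hat f-\hat g\|_{L_1}$ and apply an AM--GM split $\sqrt{ab}\le\epsilon a+b/(4\epsilon)$ with $\epsilon\asymp p$, so that the $\|\hat f-\hat g\|_{L_1}$ factor inside the square root is swallowed by the explicit $-p\|\hat f-\hat g\|_{L_1}$ term (a parallel uniform Bernstein on $|\|\cdot\|_{L_1,n}-\|\cdot\|_{L_1}|$ handles the empirical-vs-population conversion). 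The real obstacle is the remaining $\min(\|\hat f-f^*\|_{L_1},\|\hat g-f^*\|_{L_1})$: it comes from the set $\{f=g\ne f^*\}$ where no Bernstein condition $\mathrm{Var}(\psi)\lesssim\E\psi$ is available (variance can be $\Theta(1)$ while $\E\psi$ is tiny), so one cannot localize in $\E\psi$ alone. My plan is a two-parameter peeling jointly in $\|\hat f-\hat g\|_{L_1}$ and $\min(\|\hat f-f^*\|_{L_1},\|\hat g-f^*\|_{L_1})$, combined with the freedom of the algorithm to compete also against asymmetric comparator pairs such as $(\hat f,f^*)$ and $(f^*,\hat g)$: such comparators force at least one of $\hat f,\hat g$ to be an almost-empirical-minimizer of $R_n$, which in turn controls the remaining $L_1$ distance to $f^*$ by the standard Bernstein bound for $\ind[f\ne Y]-\ind[f^*\ne Y]$. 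Stitching together the two regimes --- $\{f=g\}$ handled by the ERM analysis and $\{f\ne g\}$ handled by the effective margin $p$ --- is the main technical hurdle.
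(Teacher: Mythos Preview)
Your key identity
\[
R^p\!\left(\tfrac{f+g}{2}\right)=\tfrac12\bigl(R(f)+R(g)\bigr)-p\,\|f-g\|_{L_1}
\]
is correct and is precisely the strong convexity gain the paper exploits (there it is phrased via the $\ell_q$ loss with $1/2-p=2^{-q}$, but for $\{0,1,\tfrac12\}$-valued functions it is the same computation). So you have the right mechanism: the ``$-p\|f-g\|$'' term is what replaces Massart's margin.

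The gap is in the last paragraph. You write that comparing the empirical pair $(\hat f,\hat g)$ to $(\hat f,f^*)$ and $(f^*,\hat g)$ ``force[s] at least one of $\hat f,\hat g$ to be an almost-empirical-minimizer of $R_n$, which in turn controls the remaining $L_1$ distance to $f^*$ by the standard Bernstein bound.'' The first half is true: from $\E_n\phi_{\hat f,\hat g}\le\E_n\phi_{f^*,\hat g}$ one gets $R_n(\hat f)-R_n(f^*)\le 2p\,P_n|\hat f-f^*|$. But the second half is exactly what fails in the agnostic setting. The Bernstein bound for $\ind[f\ne Y]-\ind[f^*\ne Y]$ gives
\[
R(\hat f)-R(f^*)\ \lesssim\ R_n(\hat f)-R_n(f^*)+\alpha\sqrt{\|\hat f-f^*\|_{L_1}}+\alpha^2,
\]
which controls the \emph{excess risk} in terms of the $L_1$ distance, not the other way around. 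Without a margin assumption on the original class there can be many near-minimizers of $R$ at mutual $L_1$ distance of order one; this is the very obstruction the theorem is designed to bypass. So your plan to ``control the remaining $L_1$ distance to $f^*$'' from almost-ERM status is circular, and the term $\Pr(\hat f=\hat g\ne f^*)$ in your variance bound is left uncontrolled. The two-parameter peeling you sketch does not produce a negative term in $m:=\Pr(\hat f=\hat g\ne f^*)$ anywhere in the inequalities (the only negative term is $-p\|\hat f-\hat g\|_{L_1}$, and $m$ is supported on $\{\hat f=\hat g\}$), so there is nothing to absorb the $\alpha\sqrt{m}$ contribution.

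The paper does not try to control that $L_1$ distance. Instead it uses a two-stage, sample-split procedure: on the first half it fixes an ERM $\widehat g$ and the data-dependent ball $\widehat\F$ of almost-ERMs around it, with (random) $L_2$-diameter $\D$; then it shows uniformly that every $f\in\widehat\F$ has $R(f)-R(f^*)\lesssim \alpha\D+\alpha^2$ (Lemma~5.3). The second half minimizes the $R^p$ risk over $\widehat\G=\{(f+\widehat g)/2:f\in\widehat\F\}$, whose diameter is again $\lesssim\D$, so the ERM error there is $\lesssim\alpha\D+\alpha^2$ (Lemma~5.2). The point is that $\widehat\G$ contains, by construction, a midpoint with $\|f-\widehat g\|_{L_1}\ge\D^2/4$, so your identity yields a witness with $R^p$-excess at most $c(\alpha\D+\alpha^2)-\tfrac{p}{8}\D^2$. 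Adding the two pieces and maximizing $c'\alpha\D-\tfrac{p}{8}\D^2$ over $\D$ gives $O(\alpha^2/p)$. In other words, the paper never bounds $m$; it arranges for \emph{all} error terms to be of the single form $\alpha\D+\alpha^2$ and then lets the strong convexity term $-p\D^2$ eat the linear part. That is the ``stitching'' you were looking for, and it is achieved by localization (restricting to $\widehat\F$) rather than by a Bernstein condition on the full pair class.
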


Before we proceed with the formal definition of the algorithm, we discuss some related results. First, compare the $O(\frac{1}{n})$ convergence of Theorem \ref{mainthmproc} with the $O\left(\sqrt{\frac{1}{n}}\right)$ bound \eqref{standardbound} that holds for ERM in the same fully agnostic model. Second, we notice that the parameter $p$ in our bound plays a role similar to the role of the margin parameter $h$ in the seminal work \cite{Massart06}. Indeed, in that paper the authors assume the margin assumption \eqref{eq:massartsmargin}, that is $\eta(X) = |2\E[Y|X] - 1| \ge h$ almost surely for $h \ge 0$, and that the Bayes optimal classifier $f^*_B$ belongs to the class $\mathcal{F}$. Under these assumptions, provided that $h > 0$ any ERM $\widehat{f}$ satisfies, with probability at least $1 - \delta$,
\begin{equation}
\label{massartriskbound}
R(\widehat{f}) - R(f^*_B) \lesssim \frac{d\log\frac{nh^2}{d} + \log \frac{1}{\delta}}{nh}.
\end{equation}

It appears that the result \eqref{massartriskbound} is limited to the restrictive assumption that $f^*_B$ belongs to $\mathcal F$. K{\"a}{\"a}ri{\"a}inen \cite{Kaariainen05} and later Ben-David and Urner \cite{Bendavid14} showed that the universal (valid for any learning algorithm) lower bound of order $\Omega\left(\frac{1}{\sqrt{n}}\right)$ holds for $R(\tilde{f}) - R(f^*)$ in some cases even if $Y = f^*_B(X)$ which corresponds to the most favourable case $h = 1$. The result of our Theorem \ref{mainthmproc} avoids this obstacle but replaces the margin condition $\eta(X) > h$ by the option to abstain from some predictions and uses an estimator that is different from ERM.

Further, observe that both Theorem \ref{mainthmproc} and \eqref{massartriskbound} only require that $p$ and $h$ are marginally larger than zero in order to get $O\left(\frac{1}{n}\right)$ upper bound. As an illustration of this, one may fix, say, $p = 0.01$ in Theorem \ref{mainthmproc} which corresponds to the abstention cost $0.49$. This cost is only marginally smaller than the average cost of the random guess.

\subsection{An Algorithm that Abstains on Hard Instances}
For the rest of this section we introduce and discuss a classifier with a reject option, which appears to have the following properties:
\begin{enumerate}
\item Our abstention regions have a simple interpretation: the learner refuses to predict and outputs $*$ if, based on the available information, there is nothing better than a random guess when trying to predict as well as $f^*$. 
\item By replacing $*$ by a prediction with risk marginally better than $\frac{1}{2}$ on average, which corresponds to Chow's reject option model with $0 < p <  \frac{1}{2}$ \eqref{rp}, we prove that our algorithm provides fast learning rates for the excess risk \eqref{excessriskfirst} in a completely agnostic scenario. This is the result of Theorem \ref{mainthmproc}.
\item By replacing $*$ by a random guess, we recover the well-known learning rates in the agnostic case as well as faster rates if the appropriate low noise assumption holds. 
\end{enumerate}
We need the following definition.
\begin{Definition}[Almost Empirical Risk Minimizers]\label{def:aerm}
Given a class $\F$ of the $\VC$ dimension $d$, a sample of size $n$ and a confidence level $\delta>0$, we set $\alpha = \alpha(n, d, \delta) = \sqrt{\frac{d\log \frac{n}{d} + \log(1/\delta)}{n}}$ and define the set of {\em almost empirical risk minimizers} as
    \begin{equation}
    \label{hatF}
    \widehat{\F} = \widehat{\F}(c,n, d, \delta,\widehat{g}) = \left\{f \in \F: R_n(f) - R_n(\widehat{g}) \le c(\alpha^2 + \alpha \sqrt{P_n|\widehat{g} - f|})\right\},
    \end{equation}
where $\widehat{g}$ is some empirical risk minimizer in $\F$.
\end{Definition}

We note that similar sets of almost minimizers appear in the analysis of active learning \citep{Dasgupta07} as well as in the model selection \citep{lugosi2004complexity} and the aggregation \citep{Lecue09, Mendelson17} literature. Finally, for a $\{0, 1, *\}$-valued function $f$ we introduce the empirical version of the $R^p$ risk:
\begin{equation}
\label{eq:emprprisk}    
R_n^p(f) = \frac{1}{n}\sum\limits_{i = 1}^{n}\left(\ind[f(X_i) \neq Y_i\ \text{and}\ f(X_i) \in \{0, 1\}] + \left(\frac{1}{2} - p\right)\ind[f(X_i) = *] \right).
\end{equation}
Finally, we are ready to present our algorithm. 
\begin{framed}
\begin{algorithm}
\label{alg:abstentionalgorithm}
Let $c > 0$ be a specifically chosen numerical constant in Definition \ref{def:aerm}. Given a class $\F$ of binary functions of VC dimension $d$, a sample of size $2n$, a confidence level $\delta$, and an abstention level $p\in[0,\frac{1}{2}]$, proceed as follows:
\begin{itemize}
    \item Based on the first half of the sample $\{(X_i, Y_i)_{i = 1}^n\}$ denote ERM with respect to the binary loss by $\widehat{g}$.
    \item Based on $\{(X_i, Y_i)_{i = 1}^{n}\}$ construct the class of almost empirical minimizers $\widehat{\F}$ as in Definition \ref{def:aerm}.
    \item Consider the class $\left\{\frac{f + \widehat{g}}{2}: f \in \widehat{\mathcal F}\right\}$ of $\{0,1, \frac{1}{2}\}$-valued functions and convert it into $\{0, 1, *\}$-valued class $\widehat{\mathcal{G}}$ of functions by replacing the values $\frac{1}{2}$ by $*$.
    \item If $p \in [0, \frac{1}{4}]$ output $\widehat{f}_p$ defined as the minimizer of the empirical $R_n^p$ risk over the class $\widehat{\mathcal{G}}$ based on the second half of the sample $\{(X_i, Y_i)_{i = n + 1}^{2n}\}$. 
    \item Otherwise, if $p \in (\frac{1}{4}, \frac{1}{2}]$, set $\widehat{f}_p = \widehat{f}_{1/4}$.
\end{itemize}
\end{algorithm}
\end{framed} 

We remark that our learning algorithm depends on the values $\delta, n, d, p$ as they are required to construct the set $\widehat{\F}$. Although the dependence on $p$ is quite natural, the dependence on the remaining parameters can be potentially removed. Indeed, our analysis is closely related to the analysis of the empirical-star algorithm of Audibert \cite{Audibert07}, which is completely parameter-free.

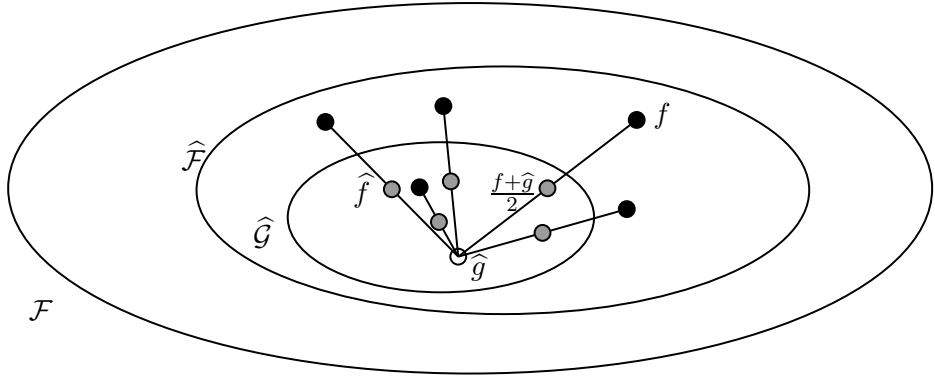
\begin{figure}[h]
    \centering
\tikzset{every picture/.style={line width=0.75pt}} 

\begin{tikzpicture}[x=0.75pt,y=0.75pt,yscale=-1,xscale=1]

\draw   (100,100.5) .. controls (100,48.86) and (204.32,7) .. (333,7) .. controls (461.68,7) and (566,48.86) .. (566,100.5) .. controls (566,152.14) and (461.68,194) .. (333,194) .. controls (204.32,194) and (100,152.14) .. (100,100.5) -- cycle ;
\draw   (195,101.5) .. controls (195,66.98) and (264.17,39) .. (349.5,39) .. controls (434.83,39) and (504,66.98) .. (504,101.5) .. controls (504,136.02) and (434.83,164) .. (349.5,164) .. controls (264.17,164) and (195,136.02) .. (195,101.5) -- cycle ;
\draw   (241,115.13) .. controls (241,94.21) and (275.59,77.25) .. (318.25,77.25) .. controls (360.91,77.25) and (395.5,94.21) .. (395.5,115.13) .. controls (395.5,136.04) and (360.91,153) .. (318.25,153) .. controls (275.59,153) and (241,136.04) .. (241,115.13) -- cycle ;
\draw  [fill={rgb, 255:red, 255; green, 255; blue, 255 }  ,fill opacity=1 ] (323,135) .. controls (323,132.79) and (324.79,131) .. (327,131) .. controls (329.21,131) and (331,132.79) .. (331,135) .. controls (331,137.21) and (329.21,139) .. (327,139) .. controls (324.79,139) and (323,137.21) .. (323,135) -- cycle ;
\draw  [fill={rgb, 255:red, 0; green, 0; blue, 0 }  ,fill opacity=1 ] (256,67) .. controls (256,64.79) and (257.79,63) .. (260,63) .. controls (262.21,63) and (264,64.79) .. (264,67) .. controls (264,69.21) and (262.21,71) .. (260,71) .. controls (257.79,71) and (256,69.21) .. (256,67) -- cycle ;
\draw  [fill={rgb, 255:red, 0; green, 0; blue, 0 }  ,fill opacity=1 ] (315.5,59) .. controls (315.5,56.79) and (317.29,55) .. (319.5,55) .. controls (321.71,55) and (323.5,56.79) .. (323.5,59) .. controls (323.5,61.21) and (321.71,63) .. (319.5,63) .. controls (317.29,63) and (315.5,61.21) .. (315.5,59) -- cycle ;
\draw  [fill={rgb, 255:red, 0; green, 0; blue, 0 }  ,fill opacity=1 ] (413,66) .. controls (413,63.79) and (414.79,62) .. (417,62) .. controls (419.21,62) and (421,63.79) .. (421,66) .. controls (421,68.21) and (419.21,70) .. (417,70) .. controls (414.79,70) and (413,68.21) .. (413,66) -- cycle ;
\draw  [fill={rgb, 255:red, 0; green, 0; blue, 0 }  ,fill opacity=1 ] (408,111) .. controls (408,108.79) and (409.79,107) .. (412,107) .. controls (414.21,107) and (416,108.79) .. (416,111) .. controls (416,113.21) and (414.21,115) .. (412,115) .. controls (409.79,115) and (408,113.21) .. (408,111) -- cycle ;
\draw  [fill={rgb, 255:red, 0; green, 0; blue, 0 }  ,fill opacity=1 ] (303.5,100) .. controls (303.5,97.79) and (305.29,96) .. (307.5,96) .. controls (309.71,96) and (311.5,97.79) .. (311.5,100) .. controls (311.5,102.21) and (309.71,104) .. (307.5,104) .. controls (305.29,104) and (303.5,102.21) .. (303.5,100) -- cycle ;
\draw    (260,67) -- (327,135) ;
\draw    (319.5,59) -- (327,135) ;
\draw    (417,66) -- (327,135) ;
\draw    (307.5,100) -- (327,135) ;
\draw    (412,111) -- (327,135) ;
\draw  [fill={rgb, 255:red, 155; green, 155; blue, 155 }  ,fill opacity=1 ] (368,100.5) .. controls (368,98.29) and (369.79,96.5) .. (372,96.5) .. controls (374.21,96.5) and (376,98.29) .. (376,100.5) .. controls (376,102.71) and (374.21,104.5) .. (372,104.5) .. controls (369.79,104.5) and (368,102.71) .. (368,100.5) -- cycle ;
\draw  [fill={rgb, 255:red, 155; green, 155; blue, 155 }  ,fill opacity=1 ] (319.25,97) .. controls (319.25,94.79) and (321.04,93) .. (323.25,93) .. controls (325.46,93) and (327.25,94.79) .. (327.25,97) .. controls (327.25,99.21) and (325.46,101) .. (323.25,101) .. controls (321.04,101) and (319.25,99.21) .. (319.25,97) -- cycle ;
\draw  [fill={rgb, 255:red, 155; green, 155; blue, 155 }  ,fill opacity=1 ] (313.25,117.5) .. controls (313.25,115.29) and (315.04,113.5) .. (317.25,113.5) .. controls (319.46,113.5) and (321.25,115.29) .. (321.25,117.5) .. controls (321.25,119.71) and (319.46,121.5) .. (317.25,121.5) .. controls (315.04,121.5) and (313.25,119.71) .. (313.25,117.5) -- cycle ;
\draw  [fill={rgb, 255:red, 155; green, 155; blue, 155 }  ,fill opacity=1 ] (289.5,101) .. controls (289.5,98.79) and (291.29,97) .. (293.5,97) .. controls (295.71,97) and (297.5,98.79) .. (297.5,101) .. controls (297.5,103.21) and (295.71,105) .. (293.5,105) .. controls (291.29,105) and (289.5,103.21) .. (289.5,101) -- cycle ;
\draw  [fill={rgb, 255:red, 155; green, 155; blue, 155 }  ,fill opacity=1 ] (365.5,123) .. controls (365.5,120.79) and (367.29,119) .. (369.5,119) .. controls (371.71,119) and (373.5,120.79) .. (373.5,123) .. controls (373.5,125.21) and (371.71,127) .. (369.5,127) .. controls (367.29,127) and (365.5,125.21) .. (365.5,123) -- cycle ;

\draw (109,155.4) node [anchor=north west][inner sep=0.75pt]    {$\mathcal{F}$};
\draw (186,75.4) node [anchor=north west][inner sep=0.75pt]    {$\widehat{\mathcal{F}}$};
\draw (222,113.4) node [anchor=north west][inner sep=0.75pt]    {$\widehat{\mathcal{G}}$};
\draw (332,132.4) node [anchor=north west][inner sep=0.75pt]    {$\widehat{g}$};
\draw (424,55.9) node [anchor=north west][inner sep=0.75pt]    {$f$};
\draw (340.5,91.4) node [anchor=north west][inner sep=0.75pt]    {$\frac{f+\widehat{g}}{2}$};
\draw (272.5,91.4) node [anchor=north west][inner sep=0.75pt]    {$\widehat{f}$};

\end{tikzpicture}
    \caption{Illustration of Algorithm \ref{alg:abstentionalgorithm}: after identifying some ERM $\widehat{g}$ on the first half of the sample and computing the class of almost empirical risk minimizers, $\widehat{\F}$, we compute the mid-points between $\widehat{g}$ and elements of $\widehat{\F}$ to form the set $\widehat{\mathcal{G}}$ and pick the best one $\widehat{f}$ according to the empirical $R^p$ risk on the second half of the sample.}
    \label{fig:agg}
\end{figure}

\subsection{Recovering the Rates of ERM}
\label{sec:ermrates}
In this section we show that the $R^0$ risk (that is, $p = 0$ in \eqref{rp}) of our Algorithm \ref{alg:abstentionalgorithm} satisfies the standard upper bounds for ERM. In this case the loss of all abstentions is essentially replaced by the expected loss of a random guess which is equal to $\frac{1}{2}$. We also show that the improved rates inherent to ERM are also recovered under the margin assumptions. Moreover, margin assumptions imply that the region of abstentions shrinks as the sample size $n$ grows. This shows that apart from the computational issues, our algorithm is preferable to ERM as it performs as good as ERM but shows additionally which instances are responsible for the slow rates.

Observe that $R^p$ is monotonic in $p$. This means, in particular, that for any $\{0, 1, *\}$-valued function $f$ it holds that
\[
R^p(f) - R(f^*) \le R^0(f) - R(f^*).
\]
Therefore, the upper bounds of this section apply automatically to the bounds on $R^p(f) - R(f^*)$.
We first start with the general agnostic case without any assumptions on the noise and show that we get the standard $O\left(\sqrt{d/n}\right)$ rate up to a logarithmic factor.

\begin{Proposition}
\label{prop:mainthmprocermfirst}
Under the conditions of Theorem \ref{mainthmproc}, for any fixed $p \in [0, \frac{1}{2})$, with probability at least $1 - \delta$, it holds that
\begin{equation}
\label{general}
R^0(\widehat{f}_p) - R(f^*) \lesssim \sqrt{\frac{d\log\frac{n}{d} + \log \frac{1}{\delta}}{n}}.
\end{equation}
\end{Proposition}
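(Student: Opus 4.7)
The plan is to reinterpret the $R^0$ risk so that the averaging structure of Algorithm~\ref{alg:abstentionalgorithm} becomes linear, and then reduce the bound to two applications of the standard VC agnostic estimate for functions in $\mathcal F$.

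First I would note that by construction $\widehat{f}_p$ is always of the form $(f+\widehat g)/2$ for some $f\in\widehat{\mathcal F}$ and the first-half ERM $\widehat g$ (for $p\in(1/4,1/2]$ this is just $\widehat f_{1/4}$, which has the same form). Viewing $*$ as $1/2$, so that $\widehat f_p$ takes values in $\{0,\tfrac12,1\}$, and using $Y\in\{0,1\}$ together with $f,\widehat g\in\{0,1\}$, a direct case check gives the pointwise identity
\[
\left|\tfrac{f(X)+\widehat g(X)}{2}-Y\right|=\tfrac12\ind[f(X)\neq Y]+\tfrac12\ind[\widehat g(X)\neq Y].
\]
On the other hand, the definition \eqref{rp} with $p=0$ is exactly $R^0(\widehat f_p)=\E\bigl|\widehat f_p(X)-Y\bigr|$ under this identification of $*$ with $1/2$. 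Combining these two observations yields the key identity
\[
R^0(\widehat f_p)=\tfrac12 R(f)+\tfrac12 R(\widehat g),
\qquad\text{so}\qquad
R^0(\widehat f_p)-R(f^*)=\tfrac12\bigl(R(f)-R(f^*)\bigr)+\tfrac12\bigl(R(\widehat g)-R(f^*)\bigr).
\]

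Next I would bound each of the two excess risks on the right by $O(\alpha)$ with $\alpha=\sqrt{(d\log(n/d)+\log(1/\delta))/n}$. For $\widehat g$ this is just the classical agnostic VC bound \eqref{standardbound} applied to the first half of the sample, valid with probability $\ge 1-\delta/2$. For $f\in\widehat{\mathcal F}$ I would use the Vapnik--Chervonenkis uniform deviation bound on $\mathcal F$: with probability $\ge 1-\delta/2$, $|R(h)-R_n(h)|\lesssim \alpha$ for every $h\in\mathcal F$ simultaneously. Writing
\[
R(f)-R(f^*)=\bigl(R(f)-R_n(f)\bigr)+\bigl(R_n(f)-R_n(\widehat g)\bigr)+\bigl(R_n(\widehat g)-R_n(f^*)\bigr)+\bigl(R_n(f^*)-R(f^*)\bigr),
\]
the first and last terms are $\lesssim\alpha$ by uniform convergence, the third term is $\le 0$ by the definition of $\widehat g$ as an empirical risk minimizer, and the second term is bounded by $c(\alpha^2+\alpha\sqrt{P_n|\widehat g-f|})\lesssim \alpha$ because $f\in\widehat{\mathcal F}$, $P_n|\widehat g-f|\le 1$, and we may assume $\alpha\le 1$. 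This gives $R(f)-R(f^*)\lesssim\alpha$, and the bound holds uniformly over all $f\in\widehat{\mathcal F}$, hence for the particular $f$ selected by the algorithm on the second half.

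Plugging both estimates into the identity from the first paragraph and a union bound over the two events (ERM bound for $\widehat g$, uniform convergence over $\mathcal F$) yields $R^0(\widehat f_p)-R(f^*)\lesssim\alpha$ with probability at least $1-\delta$, which is exactly \eqref{general}. The argument presents no real obstacle: the one point that requires care is the identification of $R^0$ with an $L_1$-type loss on $\{0,\tfrac12,1\}$-valued functions, since it is this linearization that lets the abstention set drop out of the analysis entirely and reduces Proposition~\ref{prop:mainthmprocermfirst} to a quantitative statement about elements of the original class $\mathcal F$.
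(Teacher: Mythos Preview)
Your proof is correct and is essentially the same as the paper's: both establish the identity $R^0\bigl((f+\widehat g)/2\bigr)=\tfrac12 R(f)+\tfrac12 R(\widehat g)$ (the paper writes it directly as $R(f)+R(\widehat g)=2R^0(f')$, you derive it via the $L_1$ interpretation), and then bound $R(f)-R(f^*)$ and $R(\widehat g)-R(f^*)$ uniformly over $f\in\widehat{\mathcal F}$ using standard VC uniform convergence together with the defining inequality of $\widehat{\mathcal F}$. The paper likewise notes that the bound holds for every element of $\widehat{\mathcal G}$, so the choice of $p$ is irrelevant, matching your observation that the argument is uniform over $f\in\widehat{\mathcal F}$.
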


To formulate the second result of this section we introduce the following standard definition.

\begin{Definition}[The Bernstein Assumption for the binary loss \cite{Bartlett06, Boucheron05b}]
\label{bernstein}
The class of binary functions $\mathcal F$ together with the distribution $P_{X, Y}$ satisfy the $(\beta, B)$-Bernstein assumption if for all $f \in \mathcal F$,
\[
\Pr(f(X) \neq f^*(X)) \le B\left(R(f) - R(f^*)\right)^{\beta},
\]
where $\beta \in [0, 1]$ and $B \ge 1$.
\end{Definition}
It is well known that Tsybakov's and Massart's noise assumptions imply the Bernstein assumption for some particular values of $\beta$ and $B$ (see e.g., \cite[Section 5.2]{Boucheron05b}). In particular, if $f^*_B \in \F$, Massart's assumption $\eta(X) > h$ implies the Bernstein assumption with $B = \frac{1}{h}$ and $\beta = 1$. We are ready to state our general result showing that Algorithm \ref{alg:abstentionalgorithm} is also adaptive to the Bernstein assumption even if all abstentions are replaced by the random guesses.

\begin{Proposition}
\label{prop:mainthmprocermsecond}
Under the conditions of Theorem \ref{mainthmproc}, for any fixed $p \in [0, \frac{1}{2})$ if additionally the $(\beta, B)$-Bernstein assumption holds for $(\mathcal{F}, P_{X, Y})$, then, with probability at least $1 - \delta$, it holds that
\begin{equation}
\label{bernst}
R^0(\widehat{f}_p) - R(f^*) \lesssim \left(\frac{Bd\log \frac{n}{d} + B\log \frac{1}{\delta}}{n}\right)^{\frac{1}{2 - \beta}}.
\end{equation}
Moreover, in that case we have, with probability at least $1 - \delta$,
\begin{equation}
\label{regionsize}
\Pr(\widehat{f}_p(X) = *) \lesssim \left(\frac{B^{\frac{2}{\beta}}d\log \frac{n}{d} + B^{\frac{2}{\beta}}\log \frac{1}{\delta}}{n}\right)^{\frac{\beta}{2 - \beta}}.
\end{equation}
\end{Proposition}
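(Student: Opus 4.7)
The plan is to reduce the $R^0$ risk of any element of $\widehat{\mathcal{G}}$ to a half--sum of ordinary classification risks, and then to invoke the standard fast--rate analysis of ERM under the Bernstein condition for each term separately. The key identity is
\[
R^0\Bigl(\tfrac{f+\widehat{g}}{2}\Bigr) \;=\; \tfrac12\bigl(R(f)+R(\widehat{g})\bigr),
\]
valid for any two $\{0,1\}$--valued $f,\widehat{g}$: whenever $f(X)=\widehat{g}(X)$ the classifier $(f+\widehat{g})/2$ agrees with both and its loss equals $\ind[f(X)\ne Y]=\ind[\widehat{g}(X)\ne Y]$, while on the disagreement set it takes the value $*$ and pays $\tfrac12$, which equals $\tfrac12(\ind[f(X)\ne Y]+\ind[\widehat{g}(X)\ne Y])$ since exactly one of the two indicators is $1$. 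Because $\widehat{f}_p=(f+\widehat{g})/2$ for some $f\in\widehat{\F}$, this yields the decomposition
\[
R^0(\widehat{f}_p)-R(f^*) \;=\; \tfrac12\bigl(R(\widehat{g})-R(f^*)\bigr) + \tfrac12\bigl(R(f)-R(f^*)\bigr).
\]

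It therefore suffices to show that both $R(\widehat{g})-R(f^*)$ and $R(f)-R(f^*)$ for \emph{every} $f\in\widehat{\F}$ are of order $r_n:=\bigl(B(d\log(n/d)+\log(1/\delta))/n\bigr)^{1/(2-\beta)}$, with probability at least $1-\delta$. The bound on $R(\widehat{g})-R(f^*)$ is the classical ERM fast rate under the Bernstein condition, obtainable from the variance--sensitive Vapnik--Chervonenkis inequality. The second bound is the step I expect to be the main obstacle: it is a uniform statement over $\widehat{\F}$ rather than a statement about a single minimizer. For this I would apply the same variance--sensitive concentration to the centered process $(R_n-R)(f)-(R_n-R)(\widehat{g})$ in order to transport the defining inequality
\[
R_n(f)-R_n(\widehat{g})\;\le\; c\bigl(\alpha^2+\alpha\sqrt{P_n|f-\widehat{g}|}\bigr)
\]
to its population analogue $R(f)-R(\widehat{g})\lesssim \alpha^2+\alpha\sqrt{P|f-\widehat{g}|}$, then combine the triangle inequality $P|f-\widehat{g}|\le P|f-f^*|+P|\widehat{g}-f^*|$ with the Bernstein bound $P|h-f^*|\le B(R(h)-R(f^*))^\beta$ applied to $h\in\{f,\widehat{g}\}$. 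Writing $u:=R(f)-R(f^*)$ and absorbing the already--known bound on $R(\widehat{g})-R(f^*)$ yields a self--bounding inequality $u\lesssim r_n+\alpha\sqrt{Bu^\beta}$, whose solution is $u\lesssim r_n$. Inserting both rates into the above decomposition proves~\eqref{bernst}. The variance--sensitive concentration used here is the very technical core of the proof of Theorem~\ref{mainthmproc} in Section~\ref{sec:proofofabstthm}, so at this point it only has to be recombined with the Bernstein assumption to extract the sharper exponent $1/(2-\beta)$.

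For the size of the abstention region, the same identity shows that $\widehat{f}_p(X)=*$ if and only if $f(X)\ne\widehat{g}(X)$, so the triangle inequality and Bernstein give
\[
\Pr(\widehat{f}_p(X)=*) \;=\; P|f-\widehat{g}| \;\le\; B\bigl(R(f)-R(f^*)\bigr)^\beta + B\bigl(R(\widehat{g})-R(f^*)\bigr)^\beta \;\lesssim\; B\, r_n^\beta.
\]
Substituting the expression for $r_n$ and using the identity $1+\beta/(2-\beta)=2/(2-\beta)$ to collect the $B$ factors inside the outer exponent $\beta/(2-\beta)$ recovers~\eqref{regionsize}.
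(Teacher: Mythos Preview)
Your proposal is correct and follows essentially the same route as the paper: the identity $R^0\bigl((f+\widehat{g})/2\bigr)=\tfrac12(R(f)+R(\widehat{g}))$, the uniform bound on $R(f)-R(f^*)$ for $f\in\widehat{\F}$ via the defining inequality of $\widehat{\F}$, the triangle inequality $P|f-\widehat{g}|\le P|f-f^*|+P|\widehat{g}-f^*|$, the Bernstein assumption, and the resulting self-bounding inequality are exactly the steps in Section~\ref{sec:secondprop}. The only cosmetic difference is that the paper centers the concentration step at $f^*$ (via \eqref{starshaped}) and uses $R_n(\widehat{g})\le R_n(f^*)$, whereas you center it at $\widehat{g}$; both lead to the same inequality and the same conclusion.
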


The proofs of both results of this section are  presented in Section \ref{sec:firstprop} and Section \ref{sec:secondprop}. The form of the bound \eqref{general} coincides with the optimal performance of ERM in the agnostic case (see \eqref{standardbound}). Similarly, \eqref{bernst} coincides with the ERM upper bound under the Bernstein assumption. We refer to Appendix \ref{app:ratiotype} or \cite[Equation $(22)$]{Hanneke16} for the corresponding ERM bounds.

\section{Fast Rates under Misspecification}
\label{sec:fasttartesmisspecificiation}
\subsection{Necessary and Sufficient Conditions}
\label{finitediametersection}

In this section, we extend simultaneously the results of Massart and N\'{e}d\'{e}lec \citep{Massart06} and  Ben David and Urner \cite{Bendavid14} and provide a necessary and sufficient condition for the fast rates under the misspecification. In contrast to the well-specified case, where ERM is almost optimal, it is crucial that one uses an improper procedure in the misspecified case. The necessary part will essentially follow from the two lower bounds shown in the aforementioned works. The sufficient part is our main algorithmic contribution which is obtained by a modification of our abstention Algorithm \ref{alg:abstentionalgorithm}. Before we proceed we need an additional definition.

\begin{Definition}[\cite{Bendavid14}]
\label{def:diameter}
The combinatorial diameter of $\mathcal F$ is defined as 
\[
D = \sup\limits_{f, g \in \mathcal F}\left|\{x \in \X: f(x) \neq g(x)\}\right|.
\]
\end{Definition}

When it is clear from the context we sometimes write \emph{diameter} instead of \emph{combinatorial diameter}. It is easy to see that the VC dimension is upper bounded by the diameter (i.e., $d \le D$). Although the combinatorial diameter is infinite for many natural classes, it is still relevant in the study of the learning rates. Indeed, almost all existing lower bounds in statistical learning are obtained by classes with a finite diameter \cite{Audibert09, Massart06}. Therefore, the classes with a finite diameter are expressive enough to show the tightness of the known upper bounds in different scenarios. One particular example of such a class is provided in Example \ref{exam:logfactor} below. Further, the importance of the combinatorial diameter comes from the lower bounds shown in \cite{Kaariainen05, Bendavid14}. Indeed, even if the margin parameter $h = 1$, but $D = \infty$ the lower bound  $\Omega\left(\sqrt{\frac{1}{n}}\right)$ holds for any learning algorithm. We discuss these lower bounds after Theorem \ref{thm:finitediameter}.

Let us introduce the only known upper bound for the classes with the finite combinatorial diameter. The result of \cite[Theorem 9]{Bendavid14} implies that in the case of $Y = f^*_B(X)$, that is $h = 1$, but $f^*_B$ is not necessary in the class $\F$, the following holds: there is a learning algorithm with its output classifier denoted by $\widehat{f}$ such that, with probability at least $1 - \delta$,
\begin{equation}
\label{bendavidineq}
R(\widehat{f}) - R(f^*) \lesssim \frac{D\log n + \log \frac{1}{\delta}}{n},
\end{equation}
where $D$ is the combinatorial diameter of $\F$. Extending this result to $h \le 1$ is quite problematic since the analysis of Ben-David and Urner is specific to the assumption that there is no noise in the labeling mechanism. Their algorithm actually memorizes the training sample and uses the predictions of a fixed function in $\F$ on all the instances that are not in the training sample. However, it is important that  their result is the first indication that one may get the fast rates for misspecified learning problems.

We are now ready to formulate the main algorithmic result of this paper. The proof of this result is presented in Section \ref{sec:proofofsecthm}.
\begin{Theorem}
\label{thm:finitediameter}
Consider a VC class $\mathcal{F}$ with the finite diameter $D$. If Massart's margin parameter $h$ is greater than zero and assumed to be known, then there is a learning algorithm (namely, Algorithm \ref{alg:finitediameter}) with its output classifier denoted by $\widehat{f}$ such that, with probability at least $1 - \delta$,
\[
R(\widehat{f}) - R(f^*) \lesssim  \frac{d\log\frac{n}{d} + D + \log \frac{1}{\delta}}{nh}.
\]
\end{Theorem}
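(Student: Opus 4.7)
The plan is to convert $\widehat{f}_p$ from Algorithm \ref{alg:abstentionalgorithm} into a non-abstaining classifier by resolving abstentions through a majority vote computed on a fresh half of the sample. Concretely, Algorithm \ref{alg:finitediameter} will split the data into two halves $S_1$ and $S_2$, run Algorithm \ref{alg:abstentionalgorithm} on $S_1$ with the choice $p = \min(h/2,1/4)$ to obtain $\widehat{f}_p$, and define $\widehat{f}(x)=\widehat{f}_p(x)$ whenever $\widehat{f}_p(x)\in\{0,1\}$, while $\widehat{f}(x)$ is set to the majority of the labels $\{Y_i:(X_i,Y_i)\in S_2,\ X_i=x\}$ otherwise (with a fixed $g\in\mathcal{F}$ used as a fallback on instances never observed in $S_2$). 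The key structural observation is that the abstention region $A=\{x:\widehat{f}_p(x)=*\}$ has cardinality at most $D$: by construction, $\widehat{f}_p$ is a midpoint $(f+\widehat{g})/2$ with $f,\widehat{g}\in\mathcal{F}$, and it equals $1/2$ exactly on $\{x:f(x)\neq\widehat{g}(x)\}$, a set whose size is bounded by the diameter.

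The parameter $p=h/2$ is tailored so that, under Massart's assumption $\eta(x)\ge h$, the best achievable per-atom conditional risk $(1-\eta(x))/2$ on any $x\in A$ is at most the abstention cost $1/2-p$. A direct comparison of the two risks then yields
\[
R(\widehat{f}) - R(f^*) \;\le\; \bigl(R^p(\widehat{f}_p) - R(f^*)\bigr) \;+\; \sum_{x\in A}\pi_x\,\eta(x)\,\Ind\bigl[\widehat{f}(x) \neq f_B^*(x)\bigr],
\]
where $\pi_x=P_X(\{x\})$. Theorem \ref{mainthmproc}, applied to $S_1$ with $p\gtrsim h$, controls the first summand by $O\bigl((d\log(n/d)+\log(1/\delta))/(nh)\bigr)$, which already matches the $d\log(n/d)$ and $\log(1/\delta)$ parts of the target bound.

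For the majority-vote term I condition on $S_1$ (so that $A$ becomes deterministic) and let $n_x$ denote the number of occurrences of $x\in A$ in $S_2$. Hoeffding's inequality applied to the $n_x$ conditionally i.i.d.\ labels with bias $\eta(x)\ge h$ gives $\Pr(\widehat{f}(x)\neq f_B^*(x)\mid n_x)\le \exp(-n_x\eta(x)^2/2)$. Unconditioning via $n_x\sim\mathrm{Binomial}(n/2,\pi_x)$ and using the elementary inequality $ue^{-u}\le 1/e$ together with $\eta(x)\ge h$ and $|A|\le D$ then bounds the expected majority-vote error by
\[
\E\!\left[\sum_{x\in A}\pi_x\eta(x)\,\Ind[\widehat{f}(x)\neq f_B^*(x)]\ \Big|\ S_1\right] \;\lesssim\; \sum_{x\in A}\pi_x\eta(x)\,e^{-n\pi_x\eta(x)^2/4} \;\lesssim\; \frac{D}{nh}.
\]

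The main obstacle will be upgrading this in-expectation inequality to the required high-probability bound of order $(D+\log(1/\delta))/(nh)$. I intend to handle it via a Bernstein-type concentration argument applied conditionally on $(n_x)_{x\in A}$, exploiting the self-bounded structure of the majority-vote error (its conditional variance is controlled by its conditional expectation). To prevent a spurious $\log D$ factor from entering through the per-term magnitude, I will split $A$ into ``heavy'' atoms, where $n\pi_x\eta(x)^2$ is large enough that the vote is correct except with exponentially small probability, and ``light'' atoms, each contributing at most $O(1/(nh))$ and therefore summing to $O(D/(nh))$ across at most $D$ of them. Combining this MV bound with the Theorem \ref{mainthmproc} estimate on $R^p(\widehat{f}_p)-R(f^*)$ then gives the theorem.
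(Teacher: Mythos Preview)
Your algorithm and risk decomposition are essentially identical to the paper's: run Algorithm~\ref{alg:abstentionalgorithm} with $p=h/2$, use $|A|\le D$, and write
\[
R(\widehat{f})-R(f^*)\le \bigl(R^{h/2}(\widehat{f}_{h/2})-R(f^*)\bigr)+\sum_{x\in A}\pi_x\,\eta(x)\,\Ind[\text{maj}(x)\neq f_B^*(x)].
\]
The paper's proof differs only in how it bounds the second sum. Rather than a per-atom Hoeffding computation, it observes that $\text{maj}_{S_3}$ is exactly ERM over the Boolean cube $\mathcal{C}$ on the $D$ abstention points (a finite class with $|\mathcal{C}|=2^D$), that the Bayes rule for the distribution restricted to $A$ lies in $\mathcal{C}$, and that Massart's margin condition with parameter $h$ holds there. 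A standard finite-class ERM bound under Massart's condition (e.g., inequality~(19) in \cite{Boucheron05b}) then gives, directly in high probability,
\[
\sum_{x\in A}\pi_x\,\eta(x)\,\Ind[\text{maj}(x)\neq f_B^*(x)] \;=\; R_{\tilde P}(\text{maj}_{S_3})-R_{\tilde P}(f_{B,\tilde P}^*)\;\lesssim\;\frac{\log|\mathcal{C}|+\log(1/\delta)}{nh}\;=\;\frac{D+\log(1/\delta)}{nh}.
\]
This sidesteps the obstacle you flag entirely.

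Your route is sound in expectation, but the light/heavy split you sketch is unlikely to deliver the clean $D+\log(1/\delta)$ dependence. If you declare ``heavy'' by $n\pi_x\eta(x)h\ge C$ and union bound over at most $D$ heavy atoms, you need $C\gtrsim\log(D/\delta)$, so the light atoms contribute $\lesssim CD/(nh)=D\log(D/\delta)/(nh)$, i.e., an extra $\log D$ (and a $D\log(1/\delta)$ cross term). A Bernstein argument can in principle recover the optimal dependence, but it must contend with the fact that the indicators $\Ind[\text{maj}(x)\neq f_B^*(x)]$ are not independent across $x\in A$ (the $(n_x)$ are multinomial); one has to exploit negative association or condition on $(n_x)$ and then control the worst case. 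The paper's ERM-over-$\mathcal{C}$ shortcut is both simpler and sharper, and I would recommend adopting it.
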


Before discussing this result let us formulate the matching lower bounds. First, we refer to \citep[Theorem 4]{Massart06} which shows that if $d \ge 2$ and $h \ge \sqrt{\frac{d}{n}}$ then there is an in-expectation lower bound which holds for any classifier $\widetilde{f}$ based on the sample of size $n$. It can be written as follows
\begin{equation}
\label{eq:firstlower}
    \E R(\widetilde{f}) - R(f^*) \gtrsim \frac{d}{nh}.
\end{equation}
The distributions used in the proof of the lower bound \eqref{eq:firstlower} satisfy $f^*_B \in \F$.
Second, for $h = 1$, and $D = \infty$ the lower bound in \cite[Theorem 3]{Kaariainen05} holds for any classifier $\widetilde{f}$ and scales as
\begin{equation}
\label{eq:secondlower}
R(\widetilde{f}) - R(f^*) \gtrsim \sqrt{\frac{\log \frac{1}{\delta}}{n}}.
\end{equation}
We remark that a similar lower bound is also given in \cite[Theorem 4]{Bendavid14}. For the precise formulations of \eqref{eq:firstlower} and \eqref{eq:secondlower} we refer to the original papers. In Example \ref{exam:logfactor} we also provide a particular VC class $\F$ with $D = 2d$ such that the bound of Theorem \ref{thm:finitediameter} is tight.

Let us discuss some consequences of this result. Observe that Massart's margin condition restricts the conditional distribution $Y|X$ but does not affect the marginal distribution $P_X$ of $X$. Therefore, in view of the lower bounds \eqref{eq:firstlower} and \eqref{eq:secondlower} the result of Theorem \ref{thm:finitediameter} implies that for a given VC class $\F$ the learning rate $O\left(\frac{1}{n}\right)$ is possible with respect to \emph{any marginal distribution} $P_X$ \emph{if and only if} the combinatorial diameter $D$ is finite and Massart's margin parameter $h$ is non-zero (assuming that $h$ is a fixed parameter that does not depend on $n$).

Finally, we are ready to present Algorithm \ref{thm:finitediameter}. Let us highlight some intuition behind it. The proof of Theorem \ref{mainthmproc} suggests that the reason for the slow rates is a pair of almost ERMs such that their disagreement set is relatively large. However, under the assumption that $D < \infty$ we can actually use the labeled instances to specify our predictions on these problematic instances. 

For the sake of simplicity, we assume that the noise parameter $h$ is known to the learner. We note that the analysis may be generalized to the case where the precise value of $h$ is not known. In particular, the analysis holds whenever $h$ is only a lower on the margin parameter. We need the following definition: given a sample $S = \{(X_i, Y_i)_{i = 1}^n\}$ define the majority vote function $\text{maj}_{S}$ as follows
\begin{align*}
\text{maj}_{S}(x) &= \ind\left[\sum\limits_{i = 1}^n\ind[Y_i = 1\ \text{and}\ X_i = x] > \sum\limits_{i = 1}^n\ind[Y_i = 0\ \text{and}\ X_i = x]\right].
\end{align*}
This function counts the number of ones corresponding to $x$ and the number of zeros corresponding to $x$ and outputs the value according to the majority. If there is a tie, which may happen, in particular, if the instance $x$ does not belong to $X_1, \ldots, X_n$, then $\text{maj}_{S}$ outputs zero. 

\begin{framed}
\begin{algorithm}[from Theorem \ref{thm:finitediameter}]
\label{alg:finitediameter}
Given a VC class $\F$, a sample of size $3n$ from a distribution with known margin parameter $h$, proceed as follows
\begin{itemize}
    \item Apply Algorithm \ref{alg:abstentionalgorithm} to the first $2n$ instances with parameter $p=h/2$ to obtain $\widehat{f}_{h/2}$.
    \item Use the third part of the sample $S_3 = \{(X_i, Y_i)_{i = 2n+1}^{3n}\}$ to convert the $\{0,1, *\}$-valued function $\widehat{f}_{h/2}$ to the $\{0, 1\}$-valued function $\widehat{f}$ using the following rule:
\[
\widehat{f}(x) = \begin{cases} \widehat{f}_{h/2}(x), & \mbox{if } \widehat{f}_{h/2}(x) \in \{0, 1\}, \\ \text{\emph{maj}}_{S_3}(x), & \mbox{if } \widehat{f}_{h/2}(x) = *. \end{cases}
\]
\item Output $\widehat{f}$.
\end{itemize}
\end{algorithm}
\end{framed} 

\begin{Remark}
Observe that if $h = 1$ Algorithm \ref{alg:finitediameter} gives a bound better than \eqref{bendavidineq}. Moreover, our algorithm does not require to remember the entire training sample, which is the case in \eqref{bendavidineq}. 
\end{Remark}

\subsection{A Distribution Dependent Upper Bound}
\label{sec:distribdepdiameter}
In this section, we present a corollary of Theorem \ref{thm:finitediameter} and provide a distribution-dependent version of the combinatorial diameter. In order to simplify the presentation of this more involved case, we make two assumptions:
\begin{itemize}
    \item We consider the case of the deterministic labeling, that is $h = 1$,
    \item We assume that the marginal distribution $P_X$ is known to the learner.
\end{itemize}

As already mentioned, if the diameter $D$ is infinite, then it is impossible to improve the $\Omega(\frac{1}{\sqrt{n}})$ lower bound even in the case of $h = 1$. However, this lower bound is demonstrated by the worst-case marginal distribution $P_X$. Therefore, in order to generalize our bound, we need to introduce a $P_X$-dependent analog of the combinatorial diameter. 

The proof of Theorem \ref{thm:finitediameter} is based on the following fact: for any two functions $f, g \in \F$ the boolean cube of functions supported on the set $\{x \in \X: f(x) \neq g(x)\}$ is finite. This implies, in particular, that the corresponding boolean cube is learnable with respect to any distribution. This fact is explicitly used in the proof of Theorem \ref{thm:finitediameter}.
This is, of course, the case since we consider the class $\F$ with finite combinatorial diameter $D$. However, in order to make the same argument applicable for some specific distributions $P_X$ in the case of $D = \infty$, we need some additional steps. We introduce the following notation. For $\mathcal H \subseteq \{0, 1\}^{\mathcal X}$ and $\varepsilon > 0$ let $\mathcal{N}(\mathcal H, \varepsilon,  L_1)$ denote the covering number (see e.g., \cite{Boucheron13}) with respect to $L_1(P_X)$ metric. Denote the corresponding covering set by $N(\mathcal H, \varepsilon,  L_1)$. This means, in particular, that $|N(\mathcal H, \varepsilon,  L_1)| = \mathcal{N}(\mathcal H, \varepsilon,  L_1)$. For a pair of binary functions $f, g$ denote by $\mathcal{C}_{f, g}$ the set of binary classifiers that shatters the set $\mathcal X^{\prime} = \{x \in \X: f(x) \neq g(x)\}$. These functions are set to be equal to zero on $\X \setminus \mathcal X^{\prime}$. The simplest distribution-dependent analog of the combinatorial diameter we found in our context is the following fixed point.
\begin{Definition}[A $P_X$-dependent analog of the combinatorial diameter]
Let $c_1 > 0$ be a numerical constant to be specified in our proofs. Given a class of binary functions $\F$, a marginal distribution $P_X$ and a sample size $n$, we define
\[
D_{P_X}(n) := n\ \sup\limits_{f, g \in \F}\max\limits\left\{\gamma \ge 0:   c_1 n\gamma \le \log_2 \mathcal{N}(\mathcal{C}_{f, g}, \gamma, L_{1})\right\}.
\]
\end{Definition}
\begin{Remark}
It is easy to show that $D_{P_X}(n) \le \frac{D}{c_1}$, since for any $\gamma > 0$,
\[
\sup\limits_{f, g \in \F}\log_2 \mathcal{N}(\mathcal{C}_{f, g}, \gamma, L_{1}) \le D.
\]
However, for some classes $\F$ and some marginal distributions $P_{X}$ the value of $D_{P_X}(n)$ can be finite even if $D$ is infinite. The simplest example is the situation where for any $\varepsilon \in [0, 1]$ and any two $f, g \in \F$ there is a finite set $\X_{f, g}^{\prime}(\varepsilon) \subseteq \{x \in \X: f(x) \neq g(x)\}$ such that
\[
\Pr\left(\{x \in \X: f(x) \neq g(x)\}\setminus \X^{\prime}(\varepsilon)\right) \le \varepsilon.
\]
Since the binary cube supported on $\X_{f, g}^{\prime}(\varepsilon)$ forms an $\varepsilon$-cover of the set $\mathcal{C}_{f, g}$, we have
\[
D_{P_X}(n) \le n\ \sup\limits_{f, g \in \F}\max\limits\left\{\gamma \ge 0:   c_1 n\gamma \le \left|\X^{\prime}_{f, g}(\gamma)\right|\right\}.
\]
\end{Remark}
The main result of this paragraph is the following analog of Theorem \ref{thm:finitediameter}.
\begin{Corollary}
\label{finitediametercorollary}
If Massart's margin assumption \eqref{eq:massartsmargin} is satisfied with $h = 1$, then there is a $P_X$-dependent learning algorithm (namely, Algorithm \ref{alg:finitediameterdistibution}) with its output classifier denoted by $\widehat{f}$ such that, with probability at least $1 - \delta$,
\[
R(\widehat{f}) - R(f^*) \lesssim  \frac{d\log\frac{n}{d} + D_{P_X}(n) + \log \frac{1}{\delta}}{n}.
\]
\end{Corollary}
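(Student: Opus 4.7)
The plan is to mimic the structure of the proof of Theorem \ref{thm:finitediameter}, with one crucial modification: instead of enumerating the (at most $2^D$) functions in $\mathcal{C}_{f,\widehat g}$ on the abstention region, I would build a proper $L_1(P_X)$-cover of $\mathcal{C}_{f,\widehat g}$ at scale $\gamma = D_{P_X}(n)/n$ whose log-cardinality, by the very definition of $D_{P_X}(n)$, is at most $c_1 D_{P_X}(n)$. This is the step that exploits the extra assumption that $P_X$ is known; on this finite cover I would run ERM on a fresh sample.

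More precisely, Algorithm \ref{alg:finitediameterdistibution} would split the $3n$ samples into three independent blocks $S_1, S_2, S_3$ of size $n$ and proceed as follows. On $S_1\cup S_2$, run Algorithm \ref{alg:abstentionalgorithm} with the fixed constant $p=1/4$, producing $\widehat f_{1/4}\in \widehat{\mathcal G}$ of the form $\tfrac{f+\widehat g}{2}$, where $\widehat g$ is ERM on $S_1$ and $f\in \widehat{\F}$ is the $R^{1/4}_n$-minimizer on $S_2$. By Theorem \ref{mainthmproc}, with probability at least $1-\delta/2$,
\[
R^{1/4}(\widehat f_{1/4}) - R(f^*) \lesssim \frac{d\log(n/d)+\log(1/\delta)}{n}.
\]
Since $(f,\widehat g)$ is a deterministic function of $S_1\cup S_2$ and $P_X$ is known, construct a proper $L_1(P_X)$-cover $C\subseteq \mathcal{C}_{f,\widehat g}$ at scale $\gamma = D_{P_X}(n)/n$, so that $\log_2|C|\le c_1 D_{P_X}(n)$. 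On $S_3$, let $\widehat h\in C$ be the ERM over $C$ for the $0/1$ loss, and output
\[
\widehat f(x) = \begin{cases}\widehat f_{1/4}(x),& \widehat f_{1/4}(x)\in\{0,1\},\\ \widehat h(x),& \widehat f_{1/4}(x)=*.\end{cases}
\]

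For the analysis, decompose, with $A=\{x:f(x)\neq \widehat g(x)\}$,
\[
R(\widehat f) = \Pr\bigl(\widehat f_{1/4}(X)\neq Y,\,X\notin A\bigr) + \Pr\bigl(\widehat h(X)\neq Y,\,X\in A\bigr).
\]
The first term equals $R^{1/4}(\widehat f_{1/4}) - \tfrac14\Pr(X\in A)$ and is bounded by $R(f^*)+O\!\bigl((d\log(n/d)+\log(1/\delta))/n\bigr)$ by the display above. For the second, $h=1$ forces $Y=f^*_B(X)$ almost surely, so $f^*_B\cdot \ind\{\cdot\in A\}$ is a well-defined element of $\mathcal{C}_{f,\widehat g}$; the cover property furnishes $h_0\in C$ with $\Pr(h_0(X)\neq f^*_B(X),\,X\in A)\le \gamma$. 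Writing $L(h):=\Pr(h(X)\neq Y, X\in A)$, we have $L(h_0)\le \gamma$, and because labels are deterministic the loss differences $\ind[h\neq Y,X\in A]-\ind[h_0\neq Y,X\in A]$ on $h\in C$ satisfy the variance bound $\mathrm{Var}\le \Pr(h\neq h_0,X\in A)\le L(h)+L(h_0)$. Conditioning on $S_1\cup S_2$ and applying the standard Bernstein-based fast-rate ERM bound over a finite class of size $|C|$ on the independent sample $S_3$ therefore yields, with probability $\ge 1-\delta/2$,
\[
L(\widehat h) \lesssim L(h_0) + \frac{\log|C|+\log(1/\delta)}{n} \lesssim \frac{D_{P_X}(n)+\log(1/\delta)}{n}.
\]
A union bound and dropping the non-positive $-\tfrac14\Pr(X\in A)$ term assemble the claimed estimate.

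The main obstacle I anticipate is handling the data dependence of the cover $C$ on $(f,\widehat g)$: because $C$ is random, the finite-class fast-rate ERM bound on $S_3$ must be stated conditionally on $S_1\cup S_2$, which is technically clean by independence of $S_3$ but needs articulation. Two smaller points to verify are that $f^*_B\cdot \ind\{\cdot\in A\}\in \mathcal{C}_{f,\widehat g}$ (which uses $h=1$ together with the definition of $\mathcal{C}_{f,\widehat g}$ as the full boolean cube on $\{f\neq \widehat g\}$ padded by zero elsewhere) and that the cover can be taken to be proper so that every $h\in C$ is indeed supported on $A$, costing at worst a factor of two in the covering scale.
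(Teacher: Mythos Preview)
Your proposal is correct and follows essentially the same route as the paper's proof: decompose the risk into the abstention-free part (controlled by Theorem~\ref{mainthmproc}) and the part on the abstention region, then on the latter run ERM over an $L_1(P_X)$-cover of $\mathcal{C}_{f,\widehat g}$ whose log-cardinality is bounded via the definition of $D_{P_X}(n)$, conditioning on $S_1\cup S_2$. The only cosmetic differences are that the paper takes $p=1/2$ (so the abstention cost in $R^{1/2}$ vanishes and no $-\tfrac14\Pr(X\in A)$ term needs to be dropped), includes $\log(1/\delta)/n$ in the covering radius, and cites a realizable finite-class ERM bound rather than the Bernstein-based variant you sketch; none of these affect the argument.
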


We present the corresponding learning algorithm.

\begin{framed}
\begin{algorithm}
\label{alg:finitediameterdistibution}
Given a VC class $\F$, a sample of size $3n$ from a distribution with known margin parameter $h$, proceed as follows
\begin{itemize}
    \item Apply Algorithm \ref{alg:abstentionalgorithm} to the first $2n$ instances with $p=1/2$ to construct $\widehat{f}_{1/2}$.
    \item Use the third part of the sample $S_3 = \{(X_i, Y_i)_{i = 2n+1}^{3n}\}$ to convert the $\{0,1, *\}$-valued function $\widehat{f}_{h/2}$ to the $\{0, 1\}$-valued function $\widehat{f}$. To do so, recall that $\widehat{g}$, is the ERM based on the first part of the sample and let $\widehat{g}_1 \in \F$ be the classifier that together with $\widehat g$ defines $\widehat{f}_{1/2}$. For $c_2 > 0$, which is tuned in the proof, consider the value
    \begin{equation}
    \label{radius}
    r =  c_2\left(\frac{D_{P_X}(n)}{n} + \frac{\log \frac{1}{\delta}}{n}\right).
    \end{equation}
    Given the ($S_1, S_2$-dependent) set $\mathcal{C}^{\prime} = N(\mathcal{C}_{\widehat{g},\widehat{g}_1 },r , L_{1})$, define
    \begin{equation}
    \label{ermovernet}
    \widetilde{g} = \argmin\limits_{g \in \mathcal{C}^{\prime}}\sum\limits_{i = 2n + 1}^{3n}\ind\left[g(X_i) \neq Y_i\ \text{and} \ \widehat{f}_{1/2}(X_i) = *\right].
    \end{equation}
    Finally, set
    \[
    \widehat{f}(x) = \begin{cases} \widehat{f}_{1/2}(x), & \mbox{if } \widehat{f}_{1/2}(x) \in \{0, 1\}, \\ \widetilde{g}(x), & \mbox{if } \widehat{f}_{1/2}(x) = *. \end{cases}
    \]
\item Output $\widehat{f}$.
\end{itemize}
\end{algorithm}
\end{framed} 
Finally, we note that a generalization of this result for $h \neq 1$ can be done using the techniques in \cite{Zhivotovskiy17b}. We avoid these technical generalizations to make the presentation more transparent.

\subsection{Removing the Logarithmic Factor for \texorpdfstring{$h = 1$}{Lg}}
\label{app:log}
The possibility of refining the logarithmic factors in some related bounds  has taken some attention recently \cite{Hanneke16, Zhivotovskiy18, Zhivotovskiy17, Bousquet20}. We start with the following motivating example which shows (using the known lower bounds) that the logarithmic factor in the bound of Theorem \ref{thm:finitediameter} can be sometimes necessary if the margin parameter $h < 1$. 
\begin{Example}
\label{exam:logfactor}
Fix $d \ge 2$ and consider the class $\F_d$ consisting of all binary functions taking the value one on at most $d$ instances in $\X$. This class satisfies $D = 2d$ whenever $|\X| \ge 2d$. Our Theorem \ref{thm:finitediameter} implies that without assuming that $f^*_B \in \F_d$, a classifier $\widehat{f}$ constructed by Algorithm \ref{alg:finitediameter} satisfies, with probability at least $1 - \delta$,
\begin{equation}
\label{eq:upperboundfd}
R(\widehat{f}) - \inf\limits_{f \in \F_d}R(f) \lesssim  \frac{d\log\frac{n}{d}+ \log \frac{1}{\delta}}{nh}.
\end{equation}
However, the results in \citep[Theorem 4 and Theorem 5]{Massart06} imply that provided that $h \ge \sqrt{\frac{d}{n}}$ for any classifier $\widetilde{f}$ there is a distribution with margin parameter $h$ such that
\begin{equation}
\label{eq:lowerboundwithlog}
\E R(\widetilde{f}) - \inf\limits_{f \in \F_d}R(f) \gtrsim \frac{d}{nh} + (1 - h)\frac{d\log \frac{nh^2}{d}}{nh}.
\end{equation}
\end{Example}
The important part of the lower \eqref{eq:lowerboundwithlog} is that the logarithmic factor disappears when $h = 1$ whereas it is still present in \eqref{eq:upperboundfd}. In this section we present a particularly simple upper bound that removes the superfluous logarithmic factor for $h = 1$. Thus, we are improving the in-expectation version of the upper bound of Theorem \eqref{thm:finitediameter} as well as the bound in \cite[Theorem 9]{Bendavid14}. The form of the result is similar to the upper bound for the one inclusion graph algorithm in \cite{Haussler94}, but in our case, $D$ plays the role of the VC dimension.
\begin{Proposition}
\label{prop:logfactor}
Consider a VC class $\mathcal{F}$ with the finite combinatorial diameter $D$. If Massart's margin assumption \eqref{eq:massartsmargin} holds with $h = 1$, then there is a learning algorithm with its output classifier denoted by $\widehat{f}$ such that, with probability at least $1 - \delta$,
\[
\E R(\widehat{f}) - R(f^*) \le \frac{D}{n + 1}.
\]
\end{Proposition}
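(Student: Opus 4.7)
The plan is to extend the one-inclusion graph prediction algorithm of Haussler, Littlestone and Warmuth so that the combinatorial diameter $D$ plays the role of the VC dimension in the agnostic (but $h=1$) setting. First I would use the standard exchangeability / leave-one-out identity
\[
\mathbb{E}\,R(A(Z^n))\ =\ \frac{1}{n+1}\,\mathbb{E}\sum_{i=1}^{n+1}\mathbb{1}\{A(Z_{-i})(X_i)\neq Y_i\},
\]
which reduces the claim to exhibiting a learning rule $A$ such that, on every realization of the $(n+1)$-tuple of i.i.d.\ samples, the total leave-one-out error satisfies
\[
\sum_{i=1}^{n+1}\mathbb{1}\{A(Z_{-i})(X_i)\neq Y_i\}\ \le\ D\ +\ \sum_{i=1}^{n+1}\mathbb{1}\{f^*(X_i)\neq Y_i\}.
\]
Taking expectation and dividing by $n+1$ (and using that the expectation of the in-sample error of $f^*$ is $(n+1)R(f^*)$) then yields $\mathbb{E}R(\widehat f)-R(f^*)\le D/(n+1)$.

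Next I would construct $A$ transductively. Fix any $f_0\in\mathcal F$ once and for all; given the training points $X_1,\ldots,X_n$ and a query $x$, set $V=\mathcal F|_{X_1,\ldots,X_n,x}\subseteq\{0,1\}^{n+1}$ and let $v_0=f_0|_{X_1,\ldots,X_n,x}\in V$ be the reference. On distinct sample points the combinatorial diameter hypothesis gives $d_H(v,v_0)\le D$ for every $v\in V$, so $V$ sits inside a Hamming ball of radius $D$ about $v_0$; the case of repeated sample points reduces to an easy separate argument via ERM on the collapsed sample. Build the one-inclusion graph on $V$ (edges between Hamming neighbours in $V$) and orient each edge toward whichever endpoint has smaller Hamming distance to $v_0$; this is well defined because adjacent vertices have Hamming distances to $v_0$ differing by exactly one, and the out-degree of every vertex is at most $d_H(\cdot,v_0)\le D$. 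Given the training labels $Y_1,\ldots,Y_n$, the algorithm looks for the (at most two) $v\in V$ whose restriction to the first $n$ coordinates equals $(Y_1,\ldots,Y_n)$, uses the orientation at the query coordinate to pick one of them, and outputs its value at $x$; if no such $v$ exists, it falls back to a lexicographic leave-one-out ERM in $V$ and applies the same orientation-based lift.

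To prove the deterministic bound, I would use the triangle-type inequality $\mathbb{1}\{\widehat y_i\neq Y_i\}\le\mathbb{1}\{\widehat y_i\neq f^*(X_i)\}+\mathbb{1}\{f^*(X_i)\neq Y_i\}$ summed over $i$; the second summand is already the right-hand side's second term, so the task reduces to showing $\sum_{i=1}^{n+1}\mathbb{1}\{\widehat y_i\neq f^*(X_i)\}\le D$. The key one-inclusion identity is that, because the orientation depends only on $V$ and $v_0$ (and not on $Y$), the coordinates on which the algorithm disagrees with the fixed vertex $v^*=f^*|_{X_1,\ldots,X_{n+1}}\in V$ correspond to outgoing edges of $v^*$ in the oriented one-inclusion graph, of which there are at most $d_H(v^*,v_0)\le D$. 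The hard part will be pushing this identity through in the genuinely agnostic regime where $Y\notin V$ and the LOO-ERM fallback is triggered on many coordinates simultaneously: I will need a careful lexicographic tie-breaking in the fallback, together with a directed-walk/telescoping argument along oriented paths of the one-inclusion graph from the fallback vertex toward $v_0$, so that the walk does not introduce disagreements with $v^*$ beyond the out-edges of $v^*$. It is at this point that the combinatorial diameter hypothesis — rather than merely the VC dimension — enters decisively.
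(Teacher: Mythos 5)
Your high-level strategy (the leave-one-out / exchangeability identity, then a deterministic bound of the form $\sum_{i=1}^{n+1}\ind[\widehat y_i\neq Y_i]\le D+\sum_{i=1}^{n+1}\ind[f^*(X_i)\neq Y_i]$) is exactly the right skeleton and matches the paper. But the learning rule you propose and the mechanism by which $D$ enters are genuinely different from the paper's, and the hard step you flag is a real gap, not a routine technicality.

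You reduce to showing $\sum_{i=1}^{n+1}\ind[\widehat y_i\neq f^*(X_i)]\le D$ via a one-inclusion orientation toward a reference vertex $v_0$. The realizable one-inclusion argument gives such a bound because, at each masked coordinate $i$, the seen labels $Y_{-i}$ coincide with $v^*_{-i}$, so the algorithm is always choosing between the endpoints of an edge incident to $v^*$, and the wrong choices are counted by the out-degree of $v^*$. Here, however, $Y$ is typically not in $V=\F|_{X_1,\dots,X_{n+1}}$, so $Y_{-i}$ need not match $v^*_{-i}$ for any $i$; the candidate vertices the rule is choosing between can be far from $v^*$, and the ``disagreements with $v^*$ are out-edges of $v^*$'' identity does not apply. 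You acknowledge exactly this (``the hard part'') and propose a directed-walk/telescoping repair, but you do not supply it, and it is not clear such a repair exists: nothing constrains a fallback ERM in $V$ consistent with $Y_{-i}$ to sit near $v^*$ simultaneously across all $i$. In short, the triangle-inequality reduction you chose demands a statement that the one-inclusion machinery does not obviously deliver in the agnostic regime.

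The paper's proof sidesteps all of this with a far more elementary rule. Fix a reference $f_0\in\F$ (the paper takes $f_0=0$ after relabelling, which is legitimate since XOR-ing all labels with $f_0$ preserves VC dimension and combinatorial diameter). The algorithm memorizes the sample: $\widehat f(x)=Y_i$ if $x=X_i$ for some $i$, and $\widehat f(x)=f_0(x)$ otherwise; since $h=1$, repeated $x$'s carry consistent labels so this is well-defined. Let $S^u$ be the sample with duplicates removed. A leave-one-out error at index $i$ can occur only when $(X_i,Y_i)\in S^u$ (otherwise a duplicate supplies the correct label), and in that case the prediction is $f_0(X_i)$, so
\[
(n+1)\,\mathrm{LOO}=\sum_{(X_i,Y_i)\in S^u}\ind[Y_i\neq f_0(X_i)].
\]
For any $f\in\F$, split according to whether $f(X_i)=f_0(X_i)$ or not: the points in $S^u$ are distinct elements of $\mathcal X$, so the finite combinatorial diameter gives $\sum_{S^u}\ind[f(X_i)\neq f_0(X_i)]\le D$, while $\sum_{S^u}\ind[Y_i\neq f_0(X_i),\ f(X_i)=f_0(X_i)]\le\sum_{S^u}\ind[f(X_i)\neq Y_i]$. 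Taking the infimum over $f$, extending from $S^u$ to the full sample, and plugging in $f^*$ yields $(n+1)\,\mathrm{LOO}\le D+\sum_{i=1}^{n+1}\ind[f^*(X_i)\neq Y_i]$; taking expectations finishes the proof. Note this bounds the LOO error directly against the \emph{empirical} error of $f^*$, avoiding the much stronger pointwise claim ``at most $D$ disagreements with $v^*$'' that your reduction requires. Compared with your route, the paper's buys simplicity, a complete agnostic argument, and no one-inclusion machinery; the combinatorial diameter enters only through the trivial fact that any two functions in $\F$ disagree on at most $D$ distinct points. If you still want to pursue the one-inclusion route, you would need either to abandon the triangle-inequality reduction in favour of comparing directly to $f^*$'s empirical error, or to supply a genuinely new argument controlling the agnostic fallback.

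Minor remark: the proposition's statement reads ``with probability at least $1-\delta$'' but the displayed inequality is an in-expectation bound; this is a typo in the paper and both your target and the paper's proof correctly address $\E R(\widehat f)-R(f^*)$.
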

The proof of this fact is presented in Section \ref{sec:logfactorproof}.

\section{Proof of Theorem \ref{mainthmproc}}
\label{sec:proofofabstthm}
We start with the following key observation. Let $\G$ be a class of $\{0, 1, \frac{1}{2}\}$-valued functions.
For $q \ge 1$ and $g \in \G$ define the $\ell_q$ risk as
\[
\ell_q(g) = \E|g(X) - Y|^q.
\]
Observe that if the function $g$ is only $\{0, 1\}$-valued then $\ell_q(g) = R(g)$, that it, the $\ell_q$ risk is equal to the binary risk for the binary-valued functions. In particular, the optimal binary function $f^*$ is also minimizing the $\ell_q$ risk among the corresponding binary functions.  Observe that since $Y \in \{0, 1\}$ we have for any $g \in \G$,
\[
\ell_q(g) = \Pr\left(g(X) \neq Y\ \text{and}\ g(X) \in \{0, 1\}\right) + \frac{1}{2^q}\Pr\left(g(X) = \frac{1}{2}\right).
\]
Usnig the definition \eqref{rp} of the $R^p$ risk we conclude in the notation of Theorem \ref{mainthmproc} that
\begin{equation}
\label{eq:excessriskrelation}
R^p(\widehat{f}_p) - R(f^*) = \ell_q(\widehat{f}_p^{\prime}) - \ell_q(f^*),
\end{equation}
where $\widehat{f}_p^{\prime}$ is obtained from $\widehat{f}_p$ by replacing $*$ by $\frac{1}{2}$ and $\frac{1}{2} - p = \frac{1}{2^q}$. 

Therefore, for the rest of the proof we may restrict our analysis on the $\ell_q$ risk and $\{0, 1, \frac{1}{2}\}$-valued classifiers. Finally, in Section \ref{sec:coorectqviap} we choose an appropriate value of $q$ based on $p$. Throughout the proofs we will be taking the union bounds over a finite number of events without mentioning it. In all the cases this only affects the constant factors which are always absorbed by the symbol $\lesssim$.

\subsection{Some Results from Empirical Process Theory}
\label{empprocesssection}
We present several results related to the aggregation of classifiers under the $\ell_q$ risk for $q \ge 1$. In this section we sometimes assume that $\mathcal F$ is a class of discrete-valued functions taking their values in $[0, 1]$ and we keep our convention on the definition of the growth function. Similarly, depending on the context we may consider $Y \in [0, 1]$. We define the \emph{excess loss class} as
\begin{equation}
\label{lq}
\mathcal{L}_q = \{(x, y) \to |f(x) - y|^q - |f^*(x) - y|^q: f \in \F\},
\end{equation}
where $f^* = \argmin\limits_{f \in \mathcal F}\E|f(X) - Y|^q$. Recall that as long as only $\{0, 1\}$-valued functions and values of $Y$ are considered, the $\ell_q$ risk is equivalent to the binary risk. We start with the following standard lemma. 
\begin{Lemma}
\label{pairwise}
Consider a class of binary-valued functions $\mathcal F$ of VC dimension $d$ and assume that $Y \in \{0, 1\}$. Fix $\delta\in(0,1)$ and $\alpha = \sqrt{\frac{d\log \frac{n}{d} + \log(1/\delta)}{n}}$. Then, simultaneously for all $f, g \in \F$, with probability at least $1 - \delta$,
\begin{equation}
\label{pairwisedist}
\bigl|P_n|f - g| - P|f - g|\bigr| \lesssim \alpha\sqrt{P_n|f - g|} + \alpha^2,
\end{equation}
as well as
\begin{equation}
\label{pairwisedistsec}
\bigl|P_n|f - g| - P|f - g|\bigr| \lesssim \alpha\sqrt{P|f - g|} + \alpha^2.
\end{equation}
Fix $q \ge 1$. Simultaneously for all $h \in \mathcal{L}_q$ and the corresponding $f \in \F$, with probability at least $1 - \delta$,
\begin{equation}
\label{starshaped}
|P h - P_n h| \lesssim \alpha\sqrt{P_n|f - f^*|} + \alpha^2.
\end{equation}
\end{Lemma}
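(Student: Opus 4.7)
The plan is to reduce all three inequalities to the Vapnik--Chervonenkis relative deviation inequality (in its variance-adapted, Bernstein-type form) applied to two auxiliary classes of indicator functions, combined with routine algebra. The only combinatorial input needed is $\log \mathcal{S}_{\F}(n) \lesssim d\log(n/d)$, and the rate parameter $\alpha^2 \asymp (d\log(n/d) + \log(1/\delta))/n$ will appear naturally from the concentration bounds.

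First I would establish \eqref{pairwisedistsec}. Consider the class $\Phi = \{|f-g| : f, g \in \F\}$ of $\{0,1\}$-valued functions; its growth function is bounded by $\mathcal{S}_{\F}(n)^2$, and every $\phi \in \Phi$ is an indicator, so the variance of $\phi$ under $P$ is bounded by $P\phi$. The variance-adapted VC bound, applied in both directions (i.e.\ to $\phi$ and to $1-\phi$), then yields that, with probability at least $1-\delta$, simultaneously for all $\phi \in \Phi$,
\[
|P\phi - P_n\phi| \lesssim \alpha\sqrt{P\phi} + \alpha^2,
\]
which is exactly \eqref{pairwisedistsec}.

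To pass from \eqref{pairwisedistsec} to \eqref{pairwisedist}, I would use a standard quadratic argument. Writing $u = \sqrt{P|f-g|}$ and $v = \sqrt{P_n|f-g|}$, \eqref{pairwisedistsec} gives $u^2 \le v^2 + \alpha u + \alpha^2$. Solving the resulting quadratic in $u$ produces $u \le v + c\alpha$ for an absolute constant $c$; substituting this back into the right-hand side of \eqref{pairwisedistsec} yields \eqref{pairwisedist}.

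For \eqref{starshaped}, I would exploit the fact that since $\F$ is binary and $Y \in \{0,1\}$, for every $q \ge 1$ the excess loss collapses to $h = \ind[f \neq Y] - \ind[f^* \neq Y]$. Decomposing $h = h^+ - h^-$ with $h^+ = \ind[f \neq Y,\, f^* = Y]$ and $h^- = \ind[f^* \neq Y,\, f = Y]$, both $h^{\pm}$ are indicators satisfying $P h^{\pm} \le P|f-f^*|$. Each of the two classes $\{h^+ : f \in \F\}$ and $\{h^- : f \in \F\}$ has growth function at most $\mathcal{S}_{\F}(n)$, so a second application of the variance-adapted VC bound gives $|P h^{\pm} - P_n h^{\pm}| \lesssim \alpha\sqrt{P|f-f^*|} + \alpha^2$. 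Summing the two bounds and invoking \eqref{pairwisedist} to replace $\sqrt{P|f-f^*|}$ by $\sqrt{P_n|f-f^*|} + O(\alpha)$ produces \eqref{starshaped}. The only mildly delicate point is the union bound over the three high-probability events, but it only affects absolute constants and is absorbed by $\lesssim$; no step requires a tool beyond the variance-adapted VC inequality and the growth-function bound, which is consistent with the paper's remark that Massart- and Tsybakov-type bounds can be recovered from the original Vapnik--Chervonenkis inequalities alone.
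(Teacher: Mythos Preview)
Your proposal is correct and follows essentially the same route as the paper: bound the growth function (or VC dimension) of the indicator class $\{|f-g|\}$ to obtain the variance-adapted Vapnik--Chervonenkis bounds \eqref{pairwisedist}--\eqref{pairwisedistsec}, and for \eqref{starshaped} use exactly the decomposition $h=h^{+}-h^{-}$ with $h^{+}=\ind[f\neq Y,\,f^{*}=Y]$ and $h^{-}=\ind[f=Y,\,f^{*}\neq Y]$. The only cosmetic difference is that the paper applies the empirical-denominator ratio estimate to $h^{\pm}$ and uses $\sqrt{P_n h^{+}}+\sqrt{P_n h^{-}}\le \sqrt{2P_n|f-f^{*}|}$ directly, whereas you pass through $\sqrt{P|f-f^{*}|}$ and then convert via \eqref{pairwisedist}; both are equivalent up to constants.
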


Because of the general importance of Lemma \ref{pairwise}, we give some new insights on its proof and applications in Appendix \ref{app:ratiotype}. In particular, we show that it is possible to prove the optimal risk bounds under Massart's and Tsybakov's \cite{Massart06, Tsybakov04} noise conditions using only on the original ratio-type estimates of Vapnik and Chervonenkis \cite{Vapnik74}. 
For a class $\mathcal{G}$ and $s \ge 1$ we define the $L_s$ diameter as 
\begin{equation}
\label{eq:diam}
\D(\mathcal{G}, L_s) = \sup\limits_{f, g \in \mathcal G}\|f - g\|_{L_s}.
\end{equation}
The next result is also standard. We provide a detailed sketch of its proof for the sake of completeness.
\begin{Lemma}
\label{contractionconcentration}
Consider a class of discrete-valued functions $\mathcal F$ taking their values in $[0, 1]$ and assume that $Y \in [0, 1]$ almost surely, $q \in [1, 2]$ and that $\F$ has a finite growth function. Then for any $\delta\in(0,1)$, with probability at least $1 - \delta$,
\[
\sup\limits_{h \in \mathcal{L}_q}|Ph - P_n h| \lesssim  \D(\mathcal F, L_{2})\sqrt{\frac{\log \mathcal{S}_{\mathcal{F}}(n) + \log(1/\delta)}{n}} + \frac{\log \mathcal{S}_{\mathcal{F}}(n) + \log(1/\delta)}{n}.
\]
\end{Lemma}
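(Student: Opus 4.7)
The plan is to derive the bound from Bousquet's version of Talagrand's inequality applied to $Z := \sup_{h \in \mathcal{L}_q}|Ph - P_n h|$, after controlling its expectation via symmetrization, Ledoux--Talagrand contraction, and a short self-bounding argument that relates the resulting Rademacher complexity back to the $L_2$-diameter $\D(\F, L_2)$. The two quantitative inputs needed for the concentration step are the variance $\sigma^2 := \sup_h \mathrm{Var}(h) \le 4\D(\F, L_2)^2$ and the uniform bound $U := \sup_h \|h\|_\infty \le 2$. Both follow from the elementary pointwise estimate $|h| \le q|f - f^*| \le 2|f - f^*|$, valid because $u \mapsto |u - y|^q$ is $q$-Lipschitz on $[0,1]$ for $q \in [1,2]$. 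Bousquet's inequality then yields, with probability at least $1 - \delta$,
\[
Z \lesssim \E Z + \D(\F, L_2)\sqrt{\frac{\log(1/\delta)}{n}} + \frac{\log(1/\delta)}{n},
\]
so it suffices to estimate $\E Z$.

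For the expectation, I would first apply Gin\'e--Zinn symmetrization to pass to $\E\sup_h |R_n h|$, where $R_n h = \frac{1}{n}\sum_i \varepsilon_i h(Z_i)$ is the Rademacher version of the empirical process, and then invoke the Ledoux--Talagrand contraction principle: writing $h = \psi_i(f(X_i)) - \psi_i(f^*(X_i))$ for the (data-dependent) $2$-Lipschitz maps $\psi_i(u) = |u - Y_i|^q$, contraction gives $\E Z \lesssim A$, where
\[
A := \E\sup_{f \in \F}\left|\frac{1}{n}\sum_{i=1}^n \varepsilon_i (f(X_i) - f^*(X_i))\right|.
\]
Conditionally on $X_1, \ldots, X_n$, the projection of $\F$ onto the sample has at most $\mathcal{S}_{\F}(n)$ distinct elements, so the finite-class maximal inequality for sub-Gaussian sums combined with Jensen in the outer expectation produces
\[
A \lesssim \sqrt{\frac{\log \mathcal{S}_{\F}(n)}{n}} \cdot \sqrt{\E\sup_{f \in \F} P_n(f-f^*)^2}.
\]

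The main obstacle is closing the loop between $A$ and the deterministic quantity $\D(\F, L_2)$, because $\E\sup_f P_n(f - f^*)^2$ is still an empirical supremum rather than a population constant. I would handle this by one additional round of symmetrization and contraction, now applied to the class $\{(f - f^*)^2 : f \in \F\}$ and using that $u \mapsto u^2$ is $2$-Lipschitz on $[0,1]$; together with the trivial bound $P(f-f^*)^2 \le \D(\F, L_2)^2$ this yields $\E\sup_f P_n(f-f^*)^2 \le \D(\F, L_2)^2 + c\,A$ for an absolute constant $c$. Plugging back in and using $\sqrt{x + y} \le \sqrt{x} + \sqrt{y}$ gives the self-bounded inequality $A \lesssim \D(\F, L_2)\sqrt{L/n} + \sqrt{LA/n}$ with $L := \log \mathcal{S}_{\F}(n)$, which solves (via the usual $x \le a + b\sqrt{x} \Rightarrow x \lesssim a + b^2$ trick) to $A \lesssim \D(\F, L_2)\sqrt{L/n} + L/n$. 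Inserting this bound on $\E Z$ into the Talagrand--Bousquet estimate from the first step produces the claimed inequality.
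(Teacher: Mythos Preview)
Your proposal is correct and follows essentially the same route as the paper's proof: both use the $2$-Lipschitz property of $u\mapsto|u-y|^q$ on $[0,1]$, symmetrization plus a finite-class Rademacher bound to control $\E Z$ in terms of $\E\sup_f P_n(f-f^*)^2$, a self-bounding step to reduce the latter to $\D(\F,L_2)^2$ plus a lower-order term, and finally Talagrand's concentration inequality for the deviation part. The only cosmetic difference is in the self-bounding step: the paper bounds $\E\sup_f|P_n(f-f^*)^2-P(f-f^*)^2|$ directly by $\sqrt{\log\mathcal{S}_{\F}(n)/n}$ using the crude envelope $1$, whereas you contract $u\mapsto u^2$ back to $A$ and solve the resulting quadratic; your variant is slightly cleaner and lands exactly on the stated $\D\sqrt{L/n}+L/n$ form, but the two arguments are interchangeable.
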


\begin{proof}
The strategy of the proof is quite standard (see e.g.,  \cite[Lemma 3.2]{Lecue09}, where the analysis is presented in the case of finite classes). At first, we provide an upper bound on $\E\sup\limits_{h \in \mathcal{L}_q}|Ph - P_n h|$. Observe that since $q \in [1, 2]$ and due to our boundedness assumption we have for any $f, g \in \mathcal F$,
\begin{equation}
\label{lip}
||f(X) - Y|^q - |g(X) - Y|^q| \le 2|f(X) - g(X)|.
\end{equation}
Therefore, by the symmetrization argument \cite{Talagrand14} and the standard bound for Rademacher averages \cite[Theorem 3.3 and inequality (6)]{Boucheron05b} together with Jensen's inequality
\begin{align*}
\E\sup\limits_{h \in \mathcal{L}_q}|Ph - P_n h| &\lesssim \E\sup\limits_{h \in \mathcal{L}_q}\sqrt{\frac{\log \mathcal{S}_{\mathcal{F}}(n)P_n h^2}{n}}
\\
&\lesssim \sqrt{\frac{\log \mathcal{S}_{\mathcal{F}}(n)}{n}\E\sup\limits_{f \in \mathcal F} P_n(f - f^*)^2},
\end{align*}
where we used the fact that $\mathcal{L}_q$ has the same growth function as $\mathcal F$ together with \eqref{lip}. Now, using the same lines
\begin{align*}
\E\sup\limits_{f \in \mathcal F} P_n(f - f^*)^2 &\le \E\sup\limits_{f \in \mathcal F} |P_n(f - f^*)^2 - P(f - f^*)^2| + \sup\limits_{f \in \mathcal{F}}\E(f - f^*)^2
\\
&\lesssim \sqrt{\frac{\log \mathcal{S}_{\mathcal{F}}(n)}{n}}+ \sup\limits_{f \in \mathcal{F}}\E(f - f^*)^2.
\end{align*}
Combining two bounds together we have
\[
\E\sup\limits_{h \in \mathcal{L}_q}|Ph - P_n h| \lesssim  \D(\mathcal F, L_{2})\sqrt{\frac{\log \mathcal{S}_{\mathcal{F}}(n)}{n}} + \frac{\log \mathcal{S}_{\mathcal{F}}(n)}{n}.
\]
To prove the high probability version of the bound we apply Talagrand's concentration inequality for empirical processes \cite[Theorem 12.2]{Boucheron13} to the process $|Ph - P_n h|$ indexed by $\mathcal{L}_q$ and combine it with \eqref{lip}. The claim follows.
\end{proof}

\subsection{Pairwise Aggregation}
Finally, we provide the aggregation algorithm for the $\ell_q$ risk. The algorithm of interest is a simplified version of the empirical star algorithm \cite{Audibert07} and the two-step aggregation algorithm of \cite{Lecue09} adapted for our purposes. Unfortunately, to the best of our knowledge, none of the known aggregation results can be directly applied in our case, since they are usually tuned to finite dictionaries, some of them are specific to the squared loss or some special moment assumptions, and the class of output functions $\mathcal{G}$ is almost always too large. Thus, we provide a simple self-contained analysis. The aggregating algorithm we present now is essentially the $\ell_q$ risk version of our abstention Algorithm \ref{alg:abstentionalgorithm}.

\begin{framed}
\begin{algorithm}[An aggregation algorithm for VC classes with the $\ell_q$ risk]
\label{alg:aggregationalgorithm}
Let $c > 0$ be a specifically chosen numerical constant in Definition \ref{def:aerm}. Given a class $\F$ of binary functions of VC dimension $d$, a sample of size $2n$, a confidence level $\delta$, and $q\in[0,\frac{1}{2}]$, proceed as follows:
\begin{itemize}
    \item Based on the first half of the sample $\{(X_i, Y_i)_{i = 1}^n\}$ denote ERM with respect to the binary loss by $\widehat{g}$.
    \item Based on $\{(X_i, Y_i)_{i = 1}^{n}\}$ construct the class of almost empirical minimizers $\widehat{\F}$ as in Definition \ref{def:aerm}.
    \item Consider the class $\widehat{\mathcal{G}} = \left\{\frac{f + \widehat{g}}{2}: f \in \widehat{\mathcal F}\right\}$ of $\{0,1, \frac{1}{2}\}$-valued functions.
    \item Output $\widehat{f}$ defined as minimizer of the empirical $\ell_q$ risk over the class $\widehat{\mathcal{G}}$ based on the second half of the sample $\{(X_i, Y_i)_{i = n + 1}^{2n}\}$.   
\end{itemize}
\end{algorithm}
\end{framed} 

Before analyzing this algorithm we need one more technical result.
\begin{Lemma}
\label{locoftarget}
There is a choice of the constant $c$ in Definition \ref{def:aerm} such that, with probability at least $1 - \delta$, it holds that
\[
f^* \in \widehat \F,
\]
and that for any $f \in \widehat \F$, with probability at least $1 - \delta$,
\[
R(f) - R(f^*) \lesssim \alpha \D(\widehat{\mathcal F}, L_{2}) + \alpha^2.
\]
\end{Lemma}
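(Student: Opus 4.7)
The plan is to combine the uniform concentration bounds of Lemma \ref{pairwise} with the definition of the almost-ERM set (Definition \ref{def:aerm}) and the exact empirical optimality of $\widehat{g}$. Throughout, we work in the binary setting with $q=1$, for which $|f(x)-y| = \ind[f(x)\neq y]$ and the $\ell_1$-minimizer over $\F$ coincides with $f^*$.

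For the first claim ($f^* \in \widehat{\F}$), I would apply the starshaped inequality \eqref{starshaped} to $f = \widehat{g}$ and its excess loss $h(x,y) = \ind[\widehat{g}(x)\neq y] - \ind[f^*(x)\neq y] \in \mathcal{L}_1$. Since $f^*$ minimizes the population risk, $Ph = R(\widehat{g}) - R(f^*) \ge 0$, so
\[
R_n(f^*) - R_n(\widehat{g}) \;=\; -P_n h \;\le\; |Ph - P_n h| \;\lesssim\; \alpha\sqrt{P_n|\widehat{g}-f^*|} + \alpha^2.
\]
Choosing the constant $c$ in Definition \ref{def:aerm} to exceed the absolute constant hidden in \eqref{starshaped} gives $f^* \in \widehat{\F}$ on that high-probability event.

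For the second claim, fix $f \in \widehat{\F}$ and apply \eqref{starshaped} again, this time to $h(x,y) = \ind[f(x)\neq y] - \ind[f^*(x)\neq y]$, to obtain
\[
R(f) - R(f^*) \;\le\; R_n(f) - R_n(f^*) + O\!\left(\alpha\sqrt{P_n|f-f^*|}\right) + O(\alpha^2).
\]
Because $\widehat{g}$ is the empirical minimizer, $R_n(f)-R_n(f^*) \le R_n(f)-R_n(\widehat{g})$, and by the defining condition of $\widehat{\F}$ the latter is at most $c(\alpha^2 + \alpha\sqrt{P_n|\widehat{g}-f|})$. Combining these yields
\[
R(f) - R(f^*) \;\lesssim\; \alpha\bigl(\sqrt{P_n|\widehat{g}-f|} + \sqrt{P_n|f-f^*|}\bigr) + \alpha^2.
\]
The last step is to replace the empirical $L_1$ distances by the $L_2$ diameter of $\widehat{\F}$. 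I would use the pairwise inequality \eqref{pairwisedistsec}: for binary $f,g$ it gives $P_n|f-g| \le P|f-g| + O(\alpha\sqrt{P_n|f-g|}) + O(\alpha^2)$, which after solving the quadratic and using $P|f-g| = \|f-g\|_{L_2}^2$ implies $\sqrt{P_n|f-g|} \lesssim \|f-g\|_{L_2} + \alpha$. Since $\widehat{g} \in \widehat{\F}$ trivially and $f^* \in \widehat{\F}$ by the first part, both $\|\widehat{g}-f\|_{L_2}$ and $\|f-f^*\|_{L_2}$ are at most $\D(\widehat{\F}, L_2)$, giving the claim.

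The only real subtlety is that the two conclusions must hold on a \emph{single} high-probability event, and that the constant $c$ in Definition \ref{def:aerm} must be chosen large enough to absorb the universal constants coming from both \eqref{starshaped} and \eqref{pairwisedistsec}; this is bookkeeping rather than a genuine obstacle, and the whole argument is essentially a direct combination of Lemma \ref{pairwise} with the empirical optimality of $\widehat{g}$.
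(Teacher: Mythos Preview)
Your proposal is correct and follows essentially the same route as the paper: both parts combine \eqref{starshaped} with the empirical optimality of $\widehat{g}$ and the defining inequality of $\widehat{\F}$, then convert empirical $L_1$ distances to population ones via Lemma~\ref{pairwise}. The only slip is cosmetic: the inequality you quote as \eqref{pairwisedistsec} actually has $\sqrt{P|f-g|}$ (not $\sqrt{P_n|f-g|}$) on the right, so either cite \eqref{pairwisedist} for your ``solve the quadratic'' step or, as the paper does, apply AM--GM to \eqref{pairwisedistsec} to get $P_n|f-g|\lesssim P|f-g|+\alpha^2$ directly; either route yields $\sqrt{P_n|f-g|}\lesssim \|f-g\|_{L_2}+\alpha$ and the proof goes through unchanged.
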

\begin{proof}
First, we prove that $f^* \in \widehat \F$ with high probability. Denote $h_{\widehat{g}}=|\widehat{g}-Y|-|f^*-Y|$. Since $R(f^*)\le R(\widehat{g})$, we have
\[
R_n(f^*)-R_n(\widehat{g}) = R(f^*)-R(\widehat{g}) + Ph_{\widehat{g}}-P_nh_{\widehat{g}} \le |Ph_{\widehat{g}}-P_nh_{\widehat{g}}|.
\]
Therefore, by Lemma \ref{pairwise}, we have with probability $1-\delta$,
\[
R_n(f^*)-R_n(\widehat{g}) \le c_1\paren*{\alpha\sqrt{P_n|\widehat{g}-f^*|}+\alpha^2}\,,
\]
for some $c_1 > 0$. By choosing $c = c_1$ in Definition \ref{def:aerm} we show that, with probability $1-\delta$, it holds that $f^* \in \widehat \F$. Further, observe that using $2\sqrt{ab} \le a + b$, for $a, b \ge 0$ the inequality \eqref{pairwisedistsec} implies that, with probability at least $1 - \delta$, simultaneously for all $f, g \in \mathcal F$,
\begin{equation}
\label{distancerelation}
P_n|f - g| \lesssim P|f - g| + \alpha^2.
\end{equation}
From now on we work on the event $f^* \in \widehat{\F}$. For any $f \in \widehat{\mathcal F}$ and $q \ge 1$, with probability at least $1 - \delta$,
\begin{align*}
R(f) - R(f^*) &\le \sup\limits_{h \in \mathcal{L}_q(\widehat \F)}|Ph - P_n h|+ R_n(f)  - R_n(f^*)
\\
&\le \sup\limits_{h \in  \mathcal{L}_q(\widehat \F)}|Ph - P_n h|+ R_n(f)  - R_n(\widehat{g}) \,\, \mbox{(since $R_n(\widehat{g}) \le R_n(f^*)$)}
\\
&\lesssim \sup\limits_{h \in  \mathcal{L}_q(\widehat \F)}|Ph - P_n h|+ \alpha\sqrt{P_n|\widehat{g} - f|} + \alpha^2 \,\, \mbox{(since $f \in \widehat{\F}$)}
\\
&\lesssim \sup_{\tilde{f}\in\widehat{\F}}\alpha\sqrt{P|\tilde{f}-f^*|} + \alpha\sqrt{P|\widehat{g} - f|} + \alpha^2  \,\, \mbox{(by Lemma \ref{pairwise} and \eqref{distancerelation})}
\\
&\lesssim \alpha \D(\widehat{\mathcal F}, L_{2}) + \alpha^2,
\end{align*}
where in the last line we used that $f^* \in \widehat{\F}$.
\end{proof}

In view of the equality \eqref{eq:excessriskrelation} we are going to prove Theorem \ref{mainthmproc} by showing the following lemma.
\begin{Lemma}
\label{aggtheorem}
Consider $q \in (1, 2]$, and assume that $Y \in \{0, 1\}$ and that $\mathcal F$ is a binary class with VC dimension $d$. Then, for any $\delta\in(0,1)$, we have for the output $\widehat{f}$ of Algorithm \ref{alg:aggregationalgorithm}, with probability at least $1 - \delta$,
\begin{equation}
\label{excessrisk}
\ell_q(\widehat{f}) - \ell_q(f^*) \lesssim \frac{d\log \frac{n}{d} + \log \frac{1}{\delta}}{n(q - 1)},
\end{equation}
where $f^*$ minimizes the binary risk $R(f)$ in $\mathcal F$.
\end{Lemma}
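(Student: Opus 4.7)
The plan relies on the strong convexity of $t\mapsto|t-y|^q$ for $q\in(1,2]$, which takes a particularly clean form on binary inputs. I would first establish the exact identity: for any $a,b,y\in\{0,1\}$ and $q\ge 1$,
\[
\Bigl|\tfrac{a+b}{2}-y\Bigr|^q \;=\; \tfrac{1}{2}\bigl(|a-y|^q+|b-y|^q\bigr)-c_q|a-b|,\qquad c_q:=\tfrac{1}{2}-2^{-q},
\]
by checking the two cases $a=b$ and $a\neq b$. A short elementary argument shows $c_q\gtrsim q-1$ on $(1,2]$, which is where the $(q-1)^{-1}$ factor in the claimed rate will come from.

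I would then work on the high-probability event (from Lemma~\ref{locoftarget}) that $f^*\in\widehat{\F}$, so $\widehat{f}^*:=(f^*+\widehat{g})/2\in\widehat{\mathcal{G}}$ is a legitimate comparator for $\widehat{f}=(\widehat{f}_1+\widehat{g})/2$. Applying the identity pointwise gives
\[
|\widehat{f}-Y|^q-|\widehat{f}^*-Y|^q \;=\; \tfrac{1}{2}\bigl(\ind[\widehat{f}_1\ne Y]-\ind[f^*\ne Y]\bigr) - c_q\bigl(\ind[\widehat{f}_1\ne\widehat{g}]-\ind[f^*\ne\widehat{g}]\bigr),
\]
so the $\ell_q$-excess loss splits into two binary-indicator differences, each pointwise dominated by $\ind[\widehat{f}_1\ne f^*]$. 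Conditioning on the first half of the sample (so that $\widehat{\F}$ is a random but fixed VC class of dimension at most $d$), the localized concentration~\eqref{starshaped} of Lemma~\ref{pairwise}, applied on the second half to each piece, yields $|(P-P_n)[L_q(\widehat{f})-L_q(\widehat{f}^*)]|\lesssim \alpha\sqrt{P|\widehat{f}_1-f^*|}+\alpha^2$. Combined with the ERM inequality $P_n L_q(\widehat{f})\le P_n L_q(\widehat{f}^*)$, this transfers to population, and reading off the identity gives the key relation
\[
R(\widehat{f}_1)-R(f^*) \;\le\; 2c_q\bigl(P|\widehat{f}_1-\widehat{g}|-P|f^*-\widehat{g}|\bigr) + C\alpha\sqrt{P|\widehat{f}_1-f^*|}+C\alpha^2.
\]

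Next, apply the identity once more in expectation to rewrite the target quantity as
\[
\ell_q(\widehat{f})-\ell_q(f^*) \;=\; \tfrac{1}{2}\bigl(R(\widehat{f}_1)-R(f^*)\bigr)+\tfrac{1}{2}\bigl(R(\widehat{g})-R(f^*)\bigr)-c_q P|\widehat{f}_1-\widehat{g}|.
\]
Substituting the displayed bound for $R(\widehat{f}_1)-R(f^*)$ is designed so that $c_q P|\widehat{f}_1-\widehat{g}|$ cancels exactly against $\tfrac{1}{2}\cdot 2c_q P|\widehat{f}_1-\widehat{g}|$, leaving $-c_q P|f^*-\widehat{g}|$ as a genuinely negative term. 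Together with the standard Vapnik--Chervonenkis bound $R(\widehat{g})-R(f^*)\lesssim\alpha\sqrt{P|\widehat{g}-f^*|}+\alpha^2$ obtained from~\eqref{starshaped} applied to the ERM on the first half, one arrives at
\[
\ell_q(\widehat{f})-\ell_q(f^*) \;\lesssim\; -c_q P|f^*-\widehat{g}|+\alpha\sqrt{P|\widehat{f}_1-f^*|}+\alpha\sqrt{P|\widehat{g}-f^*|}+\alpha^2.
\]

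The main obstacle is balancing these three remaining terms. I would use the triangle inequality for indicators, $|\widehat{f}_1-f^*|\le|\widehat{f}_1-\widehat{g}|+|f^*-\widehat{g}|$, to split $\alpha\sqrt{P|\widehat{f}_1-f^*|}$ into contributions controlled respectively by $P|\widehat{f}_1-\widehat{g}|$ and $P|f^*-\widehat{g}|$, then apply the weighted AM--GM inequality $\alpha\sqrt{x}\le c_q x+\alpha^2/(4c_q)$ so that the positive $P|f^*-\widehat{g}|$ pieces are absorbed into the negative $-c_q P|f^*-\widehat{g}|$ term. The trickier positive contribution of $P|\widehat{f}_1-\widehat{g}|$ is handled by invoking the ERM-derived inequality a second time: the sign constraint $R(\widehat{f}_1)-R(f^*)\ge 0$ forces $c_q P|\widehat{f}_1-\widehat{g}|\ge c_q P|f^*-\widehat{g}|-O(\epsilon)$ with the same localized error $\epsilon=\alpha\sqrt{P|\widehat{f}_1-f^*|}+\alpha^2$, which is then resolved by a self-bounding argument (either a peeling argument over dyadic scales of $P|\widehat{f}_1-\widehat{g}|$, or by solving the resulting quadratic inequality in $\sqrt{P|\widehat{f}_1-f^*|}$). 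All remaining errors collapse to $O(\alpha^2/c_q)\asymp O(\alpha^2/(q-1))$, and a union bound over the $O(1)$ high-probability events of Lemmas~\ref{pairwise} and~\ref{locoftarget} completes the proof.
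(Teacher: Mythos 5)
Your opening identity is correct and is a nice observation: on binary triples $a,b,y\in\{0,1\}$ one has $\bigl|\tfrac{a+b}{2}-y\bigr|^q=\tfrac12(|a-y|^q+|b-y|^q)-c_q|a-b|$ with $c_q=\tfrac12-2^{-q}\asymp q-1$, which turns the paper's Taylor-based strong-convexity bound~\eqref{strconvexineq} into an exact pointwise identity. The problem is the cancellation step. When you substitute the ``key relation'' $R(\widehat f_1)-R(f^*)\le 2c_q\bigl(P|\widehat f_1-\widehat g|-P|f^*-\widehat g|\bigr)+C\epsilon$ back into the exact decomposition $\ell_q(\widehat f)-\ell_q(f^*)=\tfrac12(R(\widehat f_1)-R(f^*))+\tfrac12(R(\widehat g)-R(f^*))-c_qP|\widehat f_1-\widehat g|$, the term $-c_qP|\widehat f_1-\widehat g|$ is eliminated and the only surviving negative term is $-c_qP|f^*-\widehat g|$. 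That term can be zero (take $\widehat g=f^*$) even when $\widehat{\F}$ still contains classifiers at $L_1$-distance $\Theta(1)$ from $\widehat g$, which is exactly the regime where fast rates are nontrivial; in that case your remainder is $O\bigl(\alpha\sqrt{P|\widehat f_1-f^*|}\bigr)=O(\alpha)$, the slow rate. The ``self-bounding'' fix does not repair this: the sign constraint $R(\widehat f_1)-R(f^*)\ge 0$ together with the key relation yields a \emph{lower} bound $c_qP|\widehat f_1-\widehat g|\ge c_qP|f^*-\widehat g|-O(\epsilon)$, whereas the term you need to absorb, $\alpha\sqrt{P|\widehat f_1-\widehat g|}$, requires an \emph{upper} bound on $P|\widehat f_1-\widehat g|$ — the constraint points the wrong way.

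The missing ingredient is that $\widehat f_1\in\widehat{\F}$ is an \emph{almost} empirical risk minimizer on the \emph{first} half of the sample, so (exactly as in the chain inside the proof of Lemma~\ref{locoftarget}) $R(\widehat f_1)-R(f^*)\lesssim\alpha\sqrt{P|\widehat f_1-f^*|}+\alpha\sqrt{P|\widehat f_1-\widehat g|}+\alpha^2$. If you plug \emph{this} into the exact identity and keep $-c_qP|\widehat f_1-\widehat g|$ (do not substitute the key relation), you get $\ell_q(\widehat f)-\ell_q(f^*)\lesssim\alpha\sqrt a+\alpha\sqrt b+\alpha^2-c_qa$ with $a=P|\widehat f_1-\widehat g|$, $b=P|\widehat g-f^*|$; a case split on $a\ge b$ versus $a<b$ (using the sign-constraint inequality only in the second case) then finishes via AM--GM. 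Note the paper avoids this entirely by choosing a \emph{different} comparator: it controls the population $\ell_q$-minimizer over $\widehat{\mathcal G}$ by the midpoint of $\widehat g$ and the \emph{farthest} element of $\widehat{\F}$, so the strong-convexity penalty is $\Omega\bigl((q-1)\D(\widehat{\F},L_2)^2\bigr)$, and both the ERM-selection error (via Lemma~\ref{contractionconcentration}) and the risk inflation of $\widehat g$ and the far point (via Lemma~\ref{locoftarget}) are uniformly $O(\alpha\D+\alpha^2)$, making the balance immediate.
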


\begin{proof}
To analyze the properties of the $\ell_q$ risk we need some basic properties of the function $|x|^q$ for $q \in (1, 2]$ and $|x| \le 1$. By Taylor's formula, for $x, y \neq 0$ we have
\[
|y|^q - |x|^q = q|x|^{q - 1}(y - x) + \frac{q(q-1)|\xi|^{q - 2}}{2}(y - x)^2,
\]
where $\xi$ is some mid-point. Since $q \le 2$, we have $|\xi|^{q - 2} \ge 1$, therefore
\[
|y|^q - |x|^q \ge q|x|^{q - 1}(y - x) + \frac{q(q-1)}{2}(y - x)^2,
\]
The parameter $q(q - 1)$ plays a role of the modulus of strong convexity in this case. In particular, we have in our range (see e.g., \cite{Boyd98}) that for any $t \in [0, 1]$,
\begin{equation}
\label{strconvexineq}
|tx + (1 - t)y|^q \le t|x|^q + (1 - t)|y|^q - \frac{1}{2}q(q - 1)t(1 - t)(x - y)^2.
\end{equation}

We note that the logarithm of the growth function of $\widehat{\mathcal G}$ is controlled up to an absolute constant by the logarithm of the growth function of $\F$. Moreover, the $L_2$-diameter of the random set $\widehat{\mathcal G}$ is the same as the diameter of $\widehat{\mathcal F}$. Assume that $g^*$ minimizes $\ell_q(g)$ over $g \in \widehat{\mathcal{G}}$. Since we work with the independent second part of the sample, we may assume that $g^*$ is fixed. Then for $g \in \widehat{\mathcal G}$, defined by $g = (\widehat{g} + h)/2$, where $h \in \widehat{\F}$ is a function with $\|\widehat{g} - h\|_{L_2} \ge  \D(\widehat{\mathcal F}, L_2)/2$ (it exists by the definition), we have
\begin{align*}
\ell_q(g^*)&\le \ell_q(g) \le \frac{1}{2}P|Y - \widehat{g}|^q + \frac{1}{2}P|Y - h|^q - \frac{1}{8}(q - 1)P(\widehat{g} - h)^2 \,\, \mbox{(by \eqref{strconvexineq})}
\\
&=\frac{1}{2}P|Y - \widehat{g}| + \frac{1}{2}P|Y - h| - \frac{1}{8}(q - 1)P|\widehat{g} - h|
\\
&\le P|Y - f^*|^q + 2c( \D(\widehat{\mathcal F}, L_{2})\alpha + \alpha^2) - \frac{1}{32}(q - 1) \D(\widehat{\mathcal F}, L_2)^2 \,\, \mbox{(by Lemma \ref{locoftarget})},
\end{align*}
where in the last line we also used that $\widehat{g} \in \widehat{\mathcal{F}}$.
Finally, by Lemma \eqref{contractionconcentration} and the fact that $\widehat{f}$ minimizes the empirical risk we have for some $c_1 > 0$,
\[
\ell_q(\widehat{f}) - \ell_q(g^*) \le c_1\left( \D(\widehat{\mathcal F}, L_{2})\alpha + \alpha^2\right).
\]
This implies for some $c_2 > 0$,
\begin{align*}
\ell_q(\widehat{f}) - \ell_q(f^*) &\le c_2\left( \D(\widehat{\mathcal F}, L_{2})\alpha + \alpha^2\right) - \frac{1}{32}(q - 1) \D(\widehat{\mathcal F}, L_2)^2 \lesssim \frac{d \log \frac{n}{d}+ \log \frac{1}{\delta}}{(q - 1)n},
\end{align*}
where we used that for any $x$, it holds that $c_2x- \frac{1}{32}(q - 1)x^2 \le 8c_2^2/(q - 1)$. The claim follows. 
\end{proof}

\subsection{From Aggregation to Chow's Reject Option Model}
\label{sec:coorectqviap}
It is only left to choose the correct value of $q$ depending on $p$ in Lemma \ref{aggtheorem}.
We need some elementary inequalities. 

{\bf Case $p \in [0, \frac{1}{4}]$}. We define $\frac{1}{2} - p = \frac{1}{2^q}$. Since we consider $q \in [1, 2]$, our analysis will cover only the case $p \in [0, \frac{1}{4}]$. An elementary inequality for $\frac{1}{2^q} = \frac{1}{2}\exp(-(q - 1)\log 2)$ implies,
\[
\frac{1}{2} - \frac{1}{2}(q - 1)\log 2 \le \frac{1}{2^q} \le \frac{1}{2} - \frac{1}{4}(q - 1)\log 2,
\]
provided that $q \in [1, 2]$. Therefore, we have $\frac{1}{q - 1} \le \frac{\log 2}{2p}$. Now using the result of Lemma \ref{aggtheorem} we have, with probability at least $1 - \delta$,
\[
R^p(\widehat{f}_p) - R(f^*) \lesssim \frac{d\log\frac{n}{d} + \log \frac{1}{\delta}}{np}.
\]
where $\widehat{f}_p$ is defined as $\widehat{f}$ in Theorem \ref{aggtheorem} by replacing the outputs $\frac{1}{2}$ with $*$. 

{\bf Case $p \in (\frac{1}{4}, \frac{1}{2}]$}. Observe that the bound of Theorem \ref{mainthmproc} changes by at most the factor of two if $p \in [\frac{1}{4}, \frac{1}{2}]$. Therefore, to prove the result, we may redefine $\widehat{f}_p$ in a way such that if $p \in (\frac{1}{4}, \frac{1}{2}]$ the output function is $\widehat{f}_{1/4}$.
\qed
\section{Remaining Proofs}
\label{sec:remainingproofs}
\subsection{Proof of Proposition \ref{prop:mainthmprocermfirst}}
\label{sec:firstprop}
We prove the statement for an arbitrary function in the class $\widehat{\G}$ defined in Algorithm \ref{alg:abstentionalgorithm}. Therefore, it does not matter which function in $\widehat{\G}$ will be chosen by $\widehat{f}_p$. We have
\[
\widehat{\mathcal F} \subseteq \{f \in \F: R_n(f) - R_n(\widehat{g}) \le 2c\alpha\},
\]
provided that $\alpha \le 1$. From the standard uniform convergence result \cite{Vapnik74} we have, with probability at least $1 - \delta$,
\[
R(f) - R(\widehat{g}) - R_n(f) + R_n(\widehat{g}) \lesssim \sqrt{\frac{d\log \frac{n}{d} + \log \frac{1}{\delta}}{n}}.
\]
This implies that for any $f \in \widehat{\mathcal F}$, with probability at least $1 - \delta$,
\begin{equation}
\label{riskbounds}
R(f) - R(\widehat{g}) \lesssim \sqrt{\frac{d\log \frac{n}{d} + \log \frac{1}{\delta}}{n}} \quad\text{and} \quad R(\widehat{g}) - R(f^*) \lesssim \sqrt{\frac{d\log \frac{n}{d} + \log \frac{1}{\delta}}{n}}.
\end{equation}
Observe that for any $f \in \widehat{\mathcal F}$,
\[
R(f) + R(\widehat{g}) = \E(\ind[f(X) \neq Y] +  \ind[\widehat{g}(X) \neq Y]) = 2R^{0}(f^{\prime}),
\]
where $f^{\prime}$ is defined by
\begin{equation}
\label{fprime}
f^{\prime}(x) = \begin{cases} f(x), & \mbox{if } f(x) = \widehat{g}(x), \\ *, & \mbox{if } f(x) \neq \widehat{g}(x). \end{cases}
\end{equation}
At the same time, using \eqref{riskbounds} we have, with probability at least $1 - \delta$,
\[
R^{0}(f^\prime) - R(f^*) = \frac{1}{2}(R(f) - R(\widehat{g})) + R(\widehat{g})- R(f^*) \lesssim \sqrt{\frac{d\log \frac{n}{d} + \log \frac{1}{\delta}}{n}}.
\]
Since the construction of $\widehat{\mathcal{G}}$ does not depend on $p$ and $f^\prime$ is an arbitrary function in $\widehat{\mathcal{G}}$, we prove the claim.
\qed
\subsection{Proof of Proposition \ref{prop:mainthmprocermsecond}}
\label{sec:secondprop}
At first, we prove that, with probability at least $1 - \delta$, for any $f \in \widehat{\F}$,
\begin{equation}
\label{inc}
R(f) - R(f^*) \lesssim \left(\frac{Bd\log \frac{n}{d} + B\log \frac{1}{\delta}}{n}\right)^{\frac{1}{2 - \beta}}.
\end{equation}
We first recall that $\alpha = \sqrt{\frac{d\log \frac{n}{d} + \log\frac{1}{\delta}}{n}}$. With total probability at least $1-\delta$, for any $f \in \widehat{\F}$ it holds that
\begin{align*}
R(f) - R(f^*) &\lesssim R_n(f) - R_n(\widehat{g}) + R_n(\widehat{g}) - R_n(f^*) + \alpha^2 + \alpha\sqrt{P|f - f^*|} \,\, \mbox{(by \eqref{starshaped})}
\\
&\lesssim R_n(f) - R_n(\widehat{g}) + \alpha^2 + \alpha\sqrt{P|f - f^*|} \,\, \mbox{(since $R_n(\widehat{g}) \le R_n(f^*)$)}
\\
&\lesssim  \alpha^2  + \alpha\sqrt{P_n|\widehat{g} - f|} + \alpha\sqrt{P|f - f^*|} \,\, \mbox{(since $f \in \widehat{\F}$)}
\\
&\lesssim  \alpha^2  + \alpha\sqrt{P|\widehat{g} - f|} + \alpha\sqrt{P|f - f^*|} \,\, \mbox{(by \eqref{distancerelation})}
\\
&\lesssim \alpha^2  + \alpha\sqrt{P|\widehat{g} - f^*|} + \alpha\sqrt{P|f - f^*|} \,\, \mbox{(by $P|\widehat{g} - f| \le P|\widehat{g} - f^*| + P|f - f^*|$)}
\\
&\lesssim \alpha^2  + \alpha B^{\frac{1}{2}}(R(\widehat{g}) - R(f^*))^{\frac{\beta}{2}} + \alpha B^{\frac{1}{2}}(R(f) - R(f^*))^{\frac{\beta}{2}},
\end{align*}
where in the last line we used the Bernstein assumption.
Since $\widehat{g}$ is ERM it follows from \cite{Massart06} (see also Appendix \ref{app:ratiotype} or \cite[Equation $22$]{Hanneke16}) that, with probability at least $1 - \delta$,
\begin{equation}
\label{eq:exriskerm}
R(\widehat{g}) - R(f^*) \lesssim \left(\frac{Bd\log \frac{n}{d} + B\log \frac{1}{\delta}}{n}\right)^{\frac{1}{2 - \beta}}.
\end{equation}
This implies,
\begin{align*}
R(f) - R(f^*) &\lesssim \alpha^2 + \alpha B^{\frac{1}{2}}(R(f) - R(f^*))^{\frac{\beta}{2}} + \alpha B^{\frac{1}{2}}\left(\frac{Bd\log \frac{n}{d} + B\log \frac{1}{\delta}}{n}\right)^{\frac{\beta}{2(2 - \beta)}}
\\
&\lesssim \alpha^2 + \alpha B^{\frac{1}{2}}(R(f) - R(f^*))^{\frac{\beta}{2}} + \left(\frac{Bd\log \frac{n}{d} + B\log \frac{1}{\delta}}{n}\right)^{\frac{1}{2 - \beta}}.
\end{align*}
Solving this inequality with respect to $R(f) - R(f^*)$ we prove \eqref{inc}. Thus, for any $f \in \widehat{\mathcal F}$,
\begin{align*}
\frac{1}{2}(R(f) + R(\widehat{g})-2 R(f^*)) = R^{0}(f^{\prime}) - R(f^*),
\end{align*}
where $f^{\prime}$ is defined by \eqref{fprime}. At the same time,
\[
R^{0}(f^{\prime}) - R(f^*) = \frac{1}{2}(R(f) + R(\widehat{g})-2 R(f^*)) \lesssim \left(\frac{Bd\log \frac{n}{d} + B\log \frac{1}{\delta}}{n}\right)^{\frac{1}{2 - \beta}}.
\]
This proves the first part of the claim.
Finally, for any $f \in \widehat{\mathcal F}$ and $f^{\prime}$ given by \eqref{fprime} we have, with probability at least $1 - \delta$,
\begin{align*}
\Pr(f^{\prime}(X) = *) &= \Pr(f(X) \neq \widehat{g}(X)) 
\\
&\le \Pr(f(X) \neq f^*(X)) + \Pr(\widehat{g}(X) \neq f^*(X)) 
\\
&\le B(R(f) - R(f^*))^{\beta} + B(R(\widehat{g}) - R(f^*))^{\beta}  \,\, \mbox{(by the Bernstein assumption)}
\\
&\lesssim B\left(\frac{Bd\log \frac{n}{d} + B\log \frac{1}{\delta}}{n}\right)^{\frac{\beta}{2 - \beta}},
\end{align*}
where we used \eqref{inc} and \eqref{eq:exriskerm}. 
The claim follows. \qed

\subsection{Proof of Theorem \ref{thm:finitediameter}}
\label{sec:proofofsecthm}

First, recall the standard fact that $\left|2\E[Y| X ] - 1\right| \ge h$ almost surely implies
\begin{equation}
\label{riskcond}
\Pr(f^*_B(X) \neq Y|X) \le \frac{1}{2}(1 - h).
\end{equation}
At first, Theorem \ref{mainthmproc} implies that, with probability $1 - \delta$, we have
\begin{equation}
\label{riskb}
R_{h/2}(\widehat{f}_{h/2}) - R(f^*) \le c_1\frac{d\log\frac{n}{d} + \log \frac{1}{\delta}}{nh},
\end{equation}
where $c_1 > 0$ is an absolute constant. Since the diameter of the class is finite, there are at most $D$ instances in the domain $\mathcal{X}$ such that $\widehat{f}_{h/2} = *$. This follows from the way we construct the classifier in Theorem \ref{mainthmproc}.  We want to use the third part of the sample to specify the prediction on these instances. From now on, we work conditionally on the first and the second parts of the sample and therefore the $D$ instances of interest are assumed to be fixed. This set of instances will be denoted by $\{x_1, \ldots, x_D\}$ without loss of generality.

Consider the class $\mathcal{C}$ which consists of functions that shatter the set $\{x_1, \ldots, x_D\}$ and are equal to zero everywhere else. There are exactly $2^D$ functions and they form a class with VC dimension $D$. Consider the following joint distribution of $X$ and $Y$ (denoted by $\tilde{P}$ in what follows):
\begin{itemize}
\item The distribution of $X$, that is $P_X$, remains unchanged.
\item For $x \in \{x_1, \ldots, x_D\}$ the distribution $Y|x$ is also unchanged.
\item For $x \notin \{x_1, \ldots, x_D\}$ we set $Y|x = 0$.
\end{itemize}
It is easy to see that $S_3$ can be used to learn the class $\mathcal{C}$ with respect to the distribution $\tilde{P}$. Observe that for $\tilde{P}$ the Bayes optimal rule $f^*_{B, \tilde{P}}$ is in $\mathcal{C}$. Moreover, in this case $\text{maj}_{S_3}$ is ERM over $\mathcal{C}$. Finally, by \citep[Inequality $19$]{Boucheron05b} we have, with probability at least $1 - \delta$, 
\begin{equation}
\label{majority}
R(\text{maj}_{S_3}) - R(f^*_{B, \tilde{P}}) \le c_2\frac{D + \log \frac{1}{\delta}}{nh},
\end{equation}
where the risk is defined with respect to $\tilde{P}$ and $c_2 > 0$ is a numerical constant. Observe that
\[
R(\text{maj}_{S_3}) = \Pr(\text{maj}_{S_3}(X) \neq Y\ \text{and}\ X \in  \{x_1, \ldots, x_D\}),
\]
and by \eqref{riskcond}, conditioned on the first two parts of the sample, we have
\[
R(f^*_{B, \tilde{P}}) \le \frac{1}{2}(1 - h)\Pr(\{x_1, \ldots, x_D\}).
\]
Using the union bound and combining \eqref{riskb} and \eqref{majority} we have
\begin{align*}
R(\widehat{f}) &= \Pr(\widehat{f}(X) \neq Y\ \text{and}\ X \notin  \{x_1, \ldots, x_D\})+\Pr(\text{maj}_{S_3}(X) \neq Y\ \text{and}\ X \in  \{x_1, \ldots, x_D\})
\\
&\le \Pr(\widehat{f}_{h/2}(X) \neq Y\ \text{and}\ \widehat{f}_{h/2}(X)\in \{0, 1\}) + \frac{1}{2}(1 - h)\Pr(\{x_1, \ldots, x_D\}) + c_2\frac{D + \log \frac{1}{\delta}}{nh}
\\
&= R_{h/2}(\widehat{f}_{h/2}) + c_2\frac{D + \log \frac{1}{\delta}}{nh} \le R(f^*) + c_2\frac{D + \log \frac{1}{\delta}}{nh} + c_1\frac{d\log\frac{n}{d} + \log \frac{1}{\delta}}{nh}.
\end{align*}
The claim follows. \qed

\subsection{Proof of Corollary \ref{finitediametercorollary}}

We highlight only the steps that are different from the proof of Theorem \ref{thm:finitediameter}. Using the notation above we have
\[
R(\widehat{f}) = \Pr(\widehat{f}_{1/2}(X) \neq Y\ \text{and}\ \widehat{f}_{1/2}(X) \neq *)+\Pr(\widetilde{g}(X) \neq Y\ \text{and}\ \widehat{f}_{1/2}(X) = *).
\]
By Theorem \ref{mainthmproc} we have, with probability at least $1 - \delta$,
\[
\Pr(\widehat{f}_{1/2}(x) \neq Y\ \text{and}\ \widehat{f}_{1/2}(X) \neq *) - R(f^*) \lesssim \frac{d\log \frac{n}{d} + \log \frac{1}{\delta}}{n} 
\]
It is only left to prove that, with probability at least $1 - \delta$,
\[
\Pr(\widetilde{g}(X) \neq Y\ \text{and}\ \widehat{f}_{1/2}(X) = *) \lesssim \frac{D_{P_X}(n) + \log \frac{1}{\delta}}{n}.
\]
Recall the notation of the proof of Theorem \ref{thm:finitediameter}. Observe that for the distribution $\tilde{P}$ the Bayes optimal rule $f^*_{B, \tilde{P}}$ is in $\mathcal{C}_{\widehat{g}, \widehat{g}_1}$. Moreover, since $h = 1$ for this distribution we have $Y = f^*_{B, \tilde{P}}$ almost surely. Therefore, our problem corresponds to the realizable (the Bayes optimal rule is in the class and there is no noise) case classification. To analyze this problem, we fix $\varepsilon > 0$ and consider an $\varepsilon$-net of $\mathcal{C}_{\widehat{g}, \widehat{g}_1}$ with respect to the $L_1(\tilde{P}_X)$ metric. Observe that $\widetilde{g}$ \eqref{ermovernet} corresponds to ERM over this set for $\varepsilon = r$ defined by \eqref{radius}. The corresponding risk bound is well known and is immediately implied by \cite[Theorem 5]{bshouty2009using}. This result claims, in particular, that if 
\begin{equation}
\label{ncondition}
n \ge \frac{64\log \mathcal{N}(\mathcal{C}_{\widehat{g},\widehat{g}_1 }, \varepsilon, L_{1})}{\varepsilon} + \frac{32\log \frac{1}{\delta}}{\varepsilon},
\end{equation}
then, with probability at least $1 - \delta$, we have
\[
R_{\tilde{P}}(\widetilde{g}) = \Pr(\widetilde{g}(X) \neq Y\ \text{and}\ \widehat{f}_{1/2}(X) = *) \le \varepsilon.
\]
The claim follows once we set $\varepsilon = r$ and tune the constants in Algorithm \ref{alg:finitediameterdistibution} in a way such that \eqref{ncondition} is satisfied.
\qed

\subsection{Proof of Proposition \ref{prop:logfactor}}
\label{sec:logfactorproof}
At first we recall that $h = 1$ corresponds to the case $Y = f^*_B(X)$ almost surely. Assume without loss of generality that $f = 0$ belongs to $\F$. Given a sample $S = ((X_1, Y_1), \ldots, (X_{n + 1}, Y_{n + 1}))$, denote by $S^{(i)}$ the sample $S$ with hidden $i$-th label, that is 
\[
S^{(i)} = ((X_1, Y_1), \ldots, (X_{i- 1}, Y_{i - 1}), (X_i, ?), (X_{i+ 1}, Y_{i + 1})\ldots, (X_{n + 1}, Y_{n + 1})).
\]
Let $\widehat{f}_{S^{(i)}}$ denote the classifier $\widehat{f}$, which was trained on the sample $S^{(i)}$. The leave-one-out error is defined by
\[
\text{LOO} = \frac{1}{n + 1}\sum\limits_{i = 1}^{n + 1}\ind[\widehat{f}_{S^{(i)}}(X_i) \neq Y_{i}].
\]
It is well-known \cite{Haussler94} that $\E\text{LOO} = \E R(\widehat{f})$. We consider the following learning algorithm. Given the sample $S$ it outputs $\widehat{f}$ defined as follows
\[
\widehat{f}(x) = \begin{cases} Y_i, & \mbox{if}\ x = X_i\  \mbox{for some}\ X_i \in S, \\ 0, &\mbox{otherwise}. \end{cases}
\]
Since there is no noise in the labeling, our algorithm is correctly defined. Let $S^u$ be a subset of $S$ such that any $(X_i, Y_i) \in S$ is presented only once. This set is obtained from $S$ by removing all excess copies. We have
\begin{align*}
(n + 1)\text{LOO} &= \sum\limits_{i = 1}^{n + 1}\ind[\widehat{f}_{S^{(i)}}(X_i) \neq Y_{i}] 
=  \sum\limits_{(X_i, Y_i) \in S^u}\ind[\widehat{f}_{S^{(i)}}(X_i) \neq Y_{i}]
=  \sum\limits_{(X_i, Y_i) \in S^u}\ind[Y_{i}=1]
\\
&\le D + \inf\limits_{f \in \F}\sum\limits_{(X_i, Y_i) \in S^u}\ind[f(X_i) \neq Y_i] \le D + \inf\limits_{f \in \F}\sum\limits_{i = 1}^{n + 1}\ind[f(X_i) \neq Y_i]
\\
&\le D + \sum\limits_{i = 1}^{n + 1}\ind[f^*(X_i) \neq Y_i]\,.
\end{align*}
By taking the expectation with respect to both sides of the last inequality and using $\E\ \text{LOO} = \E R(\widehat{f})$ we prove the claim.
\qed

\section{Related Work}
\label{sec:relwork}
\paragraph{Selective classification and abstentions.}
One of the related directions is the selective classification setup studied extensively in \cite{Elyaniv10, Wiener15, Elyaniv17}. The idea of selective classification is roughly the following: if we decide to classify an instance $x$, then we should always output $f^{*}(x)$. In particular, in the realizable case, one should always output the true label. Simultaneously, we want to minimize the total mass of the region of abstentions. Our aim is slightly less ambitious, and therefore our results may not be directly compared with the results in selective classification. However, our analysis does not require any assumptions presented in the selective classification literature such as realizability or the low noise assumptions \cite{Elyaniv17}.  

The statistical analysis of the Chow's model was presented in \cite{Wegkamp06, Bartlett08} among other papers. Notice that there are some important differences: for example, in \cite{Wegkamp06} the authors assume that we are given the class $\F^{\prime}$ of $\{0, 1, *\}$-valued functions and are interested in the analysis of ERM over $\F^{\prime}$ with respect to Chow's risk \cite{Chow70}. To get the fast learning rates with respect to this risk a special version of Tsyabkov's assumption was introduced for classes of $\{0, 1, *\}$-valued functions. Our results are of a slightly different flavour: we start with a class $\F$ of $\{0, 1\}$-valued functions and build a class $\widehat{\G}$ of $\{0, 1, *\}$-valued functions in order to avoid any margin assumptions.

Finally, we discuss the application of aggregation theory in the context of classification with a reject option studied by Freund, Mansour and Shapire in \cite{Freund04}. The authors consider the weighted majority type algorithm. A direct comparison with our results seems difficult, because the analysis in \cite{Freund04} is tuned to finite classes and, more importantly, the risk bounds relate the risk of the algorithm to twice the risk of the best function in the class. It is by now well-known that in this case ERM always has $\tilde{O}\left(\frac{d}{n}\right)$ rates of convergence (this result follows from \cite[Corollary $5.3$]{Boucheron05b}). Finally, their results are not adaptive to various low noise assumptions. We additionally  refer to \cite[Section $10$]{Elyaniv10} for the detailed discussion.

\paragraph{Intermediate rates.}
Most of the results obtained in the bounded noise regime can be extended to the case where the noise is not uniformly bounded but the way it approaches $1/2$ is controlled. This is the so-called Tsybakov's condition \cite{Tsybakov04} which also assumes well-specification of the model, that is, $f^*_B \in \F$ and leads to rates that are in between $O\left(1/\sqrt{n}\right)$ and $O\left(1/n\right)$. To simplify our exposition we focus on the bounded noise regime but note that our results can potentially be extended to this more general case. We see this this direction as an interesting extension of our work.

Note that there is also a general \emph{Bernstein} condition \cite{Bartlett06} which can give the fast rates and which is implied by Massart's or Tsybakov's conditions, but there are no natural situations in the misspecified classification setting where this condition is known to be satisfied. See some related discussions in \cite[Section 1]{Elyaniv17} and our results in Section \ref{sec:ermrates}.

\paragraph{Computational Aspects.}
In this paper we focus mainly on the statistical analysis of the binary classification and provide the algorithms that perform at least as good as ERM with respect to the binary loss. In particular, we need to compute ERM with respect to the binary loss as an intermediate step of our algorithms. However, it is well-known that computational problems for computing ERM appear even in the realizable classification. We refer to several hardness results presented in the book \citep{anthony2009neural}.  We remark that some of our results can potentially be made more computationally friendly in some cases via the surrogate losses (see e.g, \citep{zhang2004statistical, bartlett2006convexity, wegkamp2011support}). This is an interesting direction of future research. 
\\
\\
\textbf{Acknowledgments.} We would like to thank Ilya Tolstikhin and the anonymous reviewers for many suggestions that significantly improved the presentation of this paper.

{\footnotesize
\bibliography{mybib}
}

\appendix
\section{On Lemma \ref{pairwise}}
\label{app:ratiotype}
In this section we discuss different proofs of Lemma \ref{pairwise} and some corresponding results. One of the implications of our analysis, which to the best of our knowledge was not observed in the previous literature, is that there is a short way of proving almost optimal risk bounds under various margin conditions using only on the original ratio-type estimates of Vapnik and Chervonenkis \cite{Vapnik74} which apply to binary-valued functions. This is quite remarkable, because the existing proofs in the binary case \cite{Bartlett06, Koltchinskii06, Massart06} exploit Talagrand's inequality and some technical results from empirical process theory.

At first, the class $\mathcal{H} = \{x \to \ind[f(x) \neq g(x)]: f, g \in \mathcal {F}\}$ has the VC dimension at most $10d$ \cite{Vidyasagar03} (we may alternatively control the growth function). Now by the classical ratio-type estimates of \cite{Vapnik74, Boucheron05b} we have, with probability at least $1 - \delta$,
\begin{equation}
\label{ratiotype}
\frac{P|f - g| - P_n|f - g|}{\sqrt{P|f - g|}} \lesssim \sqrt{\frac{d\log \frac{n}{d} + \log \frac{1}{\delta}}{n}},
\end{equation}
and
\[
\frac{P_n|f - g| - P|f - g|}{\sqrt{P_n|f - g|}} \lesssim \sqrt{\frac{d\log \frac{n}{d} + \log \frac{1}{\delta}}{n}}.
\]
To prove \eqref{pairwisedist} and \eqref{pairwisedistsec} we use that for $a, b, c \ge 0$ the inequality $a \le b\sqrt{a} + c$ implies $a \le b^2 + b\sqrt{c} + c$ (see \cite{Boucheron05b}). Both inequalities now follow from simple algebra. 

To the best of our knowledge the simplest proof of \eqref{starshaped} follows from the idea in \cite{Dasgupta07, Hsu10} and is essentially presented there. We reproduce this short argument for the sake of completeness. Consider the class of $\{0, 1\}$-valued functions 
\[
\F_{1} = \{(x, y) \to \ind[f(x) \neq y\ \text{and}\ f^*(x) = y]: f \in \F\}.
\]
and
\[
\F_2 = \{(x, y) \to \ind[f(x) = y\ \text{and}\ f^*(x) \neq y]: f \in \F\}.
\]
We have for $h_{f} \in \F_1$ and $g_{f} \in \F_2$ that $h_f(x, y) + g_f(x, y) = \ind[f(x) \neq f^*(x)]$ and the VC dimension of both classes is at most $d$. Moreover, $h_f(x, y) - g_f(x, y) = \ind[f(x) \neq y] - \ind[f^*(x) \neq y]$. Applying the ratio-type estimates of \cite{Vapnik74, Boucheron05b} to $\F_1$ and $\F_2$ (as we did in the proof of \eqref{pairwisedist} and \eqref{pairwisedistsec})  and combining the results with the union bound we have for any $h \in \mathcal{L}_q$ and the corresponding $f \in \F$, with probability at least $1 - \delta$,
\begin{align*}
|Ph - P_n h| &= |P(h_f - g_f) - P_n(h_f - g_f)| \le |Ph_f - P_n h_f| + |Pg_f - P_n g_f|
\\
&\lesssim \alpha(\sqrt{P_n h_f} + \sqrt{P_n g_f}) + \alpha^2 \lesssim \alpha\sqrt{P_n |f - f^*|} + \alpha^2,
\end{align*}
where we used $\sqrt{P_n h_f} + \sqrt{P_n g_f} \le \sqrt{2P_n(h_f + g_f)}$.
The inequality \eqref{starshaped} follows.

In particular, combining \eqref{starshaped} with \eqref{pairwisedistsec} we have for any ERM (denoted by $\widehat{g}$) over $\F$, with probability at least $1 - \delta$,
\[
R(\widehat{g}) - R(f^*) \lesssim \alpha^2 + \alpha\sqrt{P|\hat g - f^*|}. 
\]
Under Assumption \ref{bernstein} this implies immediately that, with probability at least $1 - \delta$,
\[
R(\widehat{g}) - R(f^*) \lesssim \left(\frac{Bd\log \frac{n}{d} + B\log \frac{1}{\delta}}{n}\right)^{\frac{1}{2 - \beta}}.
\]

An alternative and self-contained proof of Lemma \ref{pairwise} follows from the general techniques in empirical process theory. We give a simplified version of the argument showing a version of the inequality \eqref{starshaped} and note that the remaining inequalities can be similarly obtained. That is, we show that simultaneously for all $h \in \mathcal{L}_q$ and the corresponding $f \in \F$, with probability at least $1 - \delta$, it holds that 
\[
|P h - P_n h| \lesssim \sqrt{P|f - f^*|\frac{d\log \frac{n}{d} + \log \frac{1}{\delta}}{n}} + \frac{d\log \frac{n}{d} + \log \frac{1}{\delta}}{n}.
\]
In what follows, we do not use the ratio-type estimates of \cite{Vapnik74, Boucheron05b}. Recall the definition \eqref{lq} and observe that $0 \in \mathcal{L}_1$. 
Consider the star-shaped hull of $\mathcal{L}_1$ around zero, that is, the class $\mathcal{H}$ of $[0,1]$-valued functions defined as
\[
\mathcal{H} = \{\beta f: f \in \mathcal{L}_1, \beta \in [0, 1]\}~.
\]
 Consider the following fixed point
\[
\gamma(\eta, \delta) = \inf\left\{s \ge 0: \Pr\left(\sup\limits_{h \in \mathcal{H}, Ph^2 \le s^2}|(P - P_n)h| \le \eta s^2\right) \ge 1 - \delta\right\},
\]
where $\eta$ is a numerical constant to be specified later. By the definition we have, with probability at least $1 - \delta$,
\[
\sup\limits_{h \in \mathcal H, \sqrt{Ph^2} \le \gamma(\eta, \delta)}|(P - P_n)h| \le \eta \gamma(\eta, \delta)^2.
\]
Now consider any $h \in \mathcal H$ such that $\sqrt{P h ^2} \ge \gamma(\eta, \delta)$. By the star-shapedness we have for any $h^{\prime} \in \mathcal H$ defined by $\frac{\gamma(\eta, \delta)h}{\sqrt{P h ^2}}$ that 
\[
|(P - P_n)h^{\prime}| \le \eta \gamma(\eta, \delta)^2,
\]
which implies for any $h \in \mathcal H$,
\begin{equation}
\label{fixedpoin}
|(P - P_n)h| \le \eta \gamma(\eta, \delta)\sqrt{Ph^2} + \eta\gamma(\eta, \delta)^2.
\end{equation}
Observe that the following holds for any $h \in \mathcal{H}$ and the corresponding $f \in \F$,
\[
Ph^2 = \beta^2P(|f(X) - Y| - |f^*(X) - Y|)^2 \le P|f - f^*|^2 \le P|f - f^*|,
\]
where $\beta \in [0, 1]$ comes from the star-shapedness. Finally, we provide an upper bound on $\gamma(\eta, \delta)$ and choose the value of $\eta$. In what follows, we prove that, with probability at least $1 - \delta$,
\begin{equation}
\label{eq:desiredpoint}
\sup\limits_{h \in \mathcal{H}, Ph^2 \le s^2}|(P - P_n)h| \lesssim s\sqrt{\frac{d\log\frac{n}{d} + \log\frac{1}{\delta}}{n}} + \frac{d\log\frac{n}{d} + \log\frac{1}{\delta}}{n}.
\end{equation}
For $\mathcal H$ let $\mathcal{N}(\mathcal H, \varepsilon,  L_2(P_n))$ denote the covering number with respect to the distance induced by $L_2(P_n)$ norm.  By Haussler's Lemma \cite{Haussler95} (we refer to \cite{Zhivotovskiy17} for a simplified proof of this result) we have for any VC class $\F$, $\mathcal{N}(\F, \varepsilon, L_2(P_n)) \le e(d + 1)\left(\frac{2e}{\varepsilon^2}\right)^d$. By the standard argument \cite{Bartlett06} for star-shaped hulls we have
\begin{equation}
\label{coveringnum}
\mathcal{N}(\mathcal H, \varepsilon,  L_2(P_n)) \le e(d + 1)\left(\frac{8e}{\varepsilon^2}\right)^d\left(1 + \left\lceil\frac{2}{\varepsilon}\right\rceil\right).
\end{equation}
By the symmetrization argument \cite{Talagrand14} we have
\begin{equation}
\label{eq:symmetrization}
\sup\limits_{h \in \mathcal{H}, Ph^2 \le s^2}|(P - P_n)h| \le 2\E\sup\limits_{h \in \mathcal{H}, Ph^2 \le s^2}\left|\frac{1}{n}\sum\limits_{i = 1}^n \varepsilon_i h(X_i)\right|,
\end{equation}
where $(\varepsilon_i)_{i = 1}^n$ are independent Rademacher random variables. Recall the notation of the $L_s$ diameter \eqref{eq:diam} and denote
\[
\mathcal{D}_n = \mathcal{D}(\{h \in \mathcal{H}: Ph^2 \le s^2\}, L_2(P_n)).
\]
Using \eqref{coveringnum} and Dudley's integral arguments for Bernoulli processes (see e.g., \citep[Lines of the proof of Lemma 13.5]{Boucheron13}) we have
\begin{align}
\E\sup\limits_{h \in \mathcal{H}, Ph^2 \le s^2}\left|\sum\limits_{i = 1}^n \varepsilon_i h(X_i)\right|
&\lesssim \sqrt{n}\E\int\limits_{0}^{\mathcal{D}_n}\sqrt{d\log \frac{e}{r}}dr \nonumber
\\
&\lesssim \sqrt{n}\E\mathcal{D}_n\sqrt{d\log \frac{e}{\mathcal{D}_n}}\left(\ind[\mathcal{D}_n \ge \sqrt{d/n}] + \ind[\mathcal{D}_n < \sqrt{d/n}]\right) \nonumber
\\
&\lesssim \sqrt{n}\E\mathcal{D}_n\sqrt{d\log \frac{n}{d}} + d\sqrt{\log \frac{n}{d}}~.
\label{eq:symmetrproc}
\end{align}
By Jensen's inequality combined with the symmetrization and contraction inequalities we have
\begin{align}
\sqrt{n}\E\mathcal{D}_n &\lesssim \sqrt{\E\sup\limits_{h \in \mathcal{H}, Ph^2 \le s^2}\sum_{i = 1}^{n} h^2(X_i)}\nonumber
\\
&\lesssim \sqrt{\E\sup\limits_{h \in \mathcal{H}, Ph^2 \le s^2}\sum_{i = 1}^{n} (h^2(X_i) -\E h^2(X_i)) + s^2} \nonumber
\\
&\lesssim \left(\sqrt{\E\sup\limits_{h \in \mathcal{H}, Ph^2 \le s^2}\left(\sum_{i = 1}^n\varepsilon_i h(X_i)\right)}+ s\right)~.
\label{eq:symmetrprocsec}
\end{align}
Combining \eqref{eq:symmetrization}, \eqref{eq:symmetrproc} and \eqref{eq:symmetrprocsec} we obtain
\[
\E\sup\limits_{h \in \mathcal{H}, Ph^2 \le s^2}|(P - P_n)h| \lesssim s\sqrt{\frac{d\log\frac{n}{d}}{n}} + \frac{d\log\frac{n}{d}}{n}.
\]
Finally, we apply Talagrand's concentration inequality \cite[Theorem 12.2]{Boucheron13} to the process $|(P - P_n)h|$ indexed by $\{h \in \mathcal{H}: Ph^2 \le s^2\}$ and prove \eqref{eq:desiredpoint}.
Fixing an appropriate numerical constant $\eta > 0$, we have
\[
\gamma(\eta, \delta) \lesssim \sqrt{\frac{d\log \frac{n}{d} + \log \frac{1}{\delta}}{n}}.
\]
The claim follows immediately by \eqref{fixedpoin}.

\end{document}